\pdfoutput=1
\documentclass[11pt,a4paper]{article}

\usepackage[margin=1in]{geometry}
\usepackage{amsmath,amssymb,amsfonts,amsthm,mathtools}
\usepackage{bm}
\usepackage{booktabs}
\usepackage{array}
\usepackage{enumitem}
\PassOptionsToPackage{hyphens}{url}
\usepackage{hyperref}
\usepackage{xcolor}
\usepackage{graphicx}
\usepackage{tikz}
\usetikzlibrary{positioning,arrows.meta}
\usepackage[edges]{forest}
\usepackage{longtable}
\usepackage{algorithm}
\usepackage{algpseudocode}
\usepackage{listings}

\hypersetup{colorlinks=true, linkcolor=blue!50!black, citecolor=blue!50!black, urlcolor=blue!50!black,
  pdftitle={PAA: The Probabilistic Allen Algebra -- A generative and complete probabilistic extension of Allen's interval relations},
  pdfauthor={Julian Eggert},
  pdfkeywords={Allen interval algebra, probabilistic temporal reasoning, Gaussian orthant probabilities, temporal uncertainty, vague temporal language}}

\title{PAA: The Probabilistic Allen Algebra\\[0.4em]
  \large A generative and complete probabilistic extension of Allen's interval relations}
\author{Julian Eggert\thanks{Honda Research Institute, Carl-Legien-Str.\ 30, 63073 Offenbach, Germany. Email: \href{mailto:julian.eggert@honda-ri.de}{julian.eggert@honda-ri.de}. Accompanying open-source implementation: \url{https://github.com/HRI-EU/probabilistic-allen-algebra}}}
\date{September 17, 2026}

\providecommand{\Description}[2][]{}

\newtheorem{proposition}{Proposition}

\newcommand{\N}{\mathcal{N}}
\newcommand{\Prob}{\mathbb{P}}
\newcommand{\E}{\mathbb{E}}
\newcommand{\PhiCDF}{\Phi}

\newcommand{\given}{\,|\,}
\newcommand{\dd}{\,\mathrm{d}}

\newcommand{\transpose}{^{\mathsf T}}
\newcommand{\erfop}{\operatorname{erf}}
\newcommand{\erfcop}{\operatorname{erfc}}
\newcommand{\before}{\texttt{before}}
\newcommand{\after}{\texttt{after}}
\newcommand{\meets}{\texttt{meets}}
\newcommand{\metby}{\texttt{met\_by}}
\newcommand{\overlaps}{\texttt{overlaps}}
\newcommand{\overlappedby}{\texttt{overlapped\_by}}
\newcommand{\during}{\texttt{during}}
\newcommand{\contains}{\texttt{contains}}
\newcommand{\starts}{\texttt{starts}}
\newcommand{\startedby}{\texttt{started\_by}}
\newcommand{\finishes}{\texttt{finishes}}
\newcommand{\finishedby}{\texttt{finished\_by}}
\newcommand{\equals}{\texttt{equals}}
\newcommand{\argmax}{\operatorname*{arg\,max}}

\newcommand{\ipair}[4]{%
  \begin{tikzpicture}[baseline=-0.6ex, x=1.5mm, y=1.5mm]
    \draw[fill=black!22] (#1,1.2) rectangle (#2,2.1);
    \draw[fill=white] (#3,0.0) rectangle (#4,0.9);
  \end{tikzpicture}%
}
\newcommand{\ipairf}[2]{%
  \begin{tikzpicture}[baseline=-0.6ex, x=1.5mm, y=1.5mm]
    \draw[fill=black!22] (#1,1.2) rectangle (#2,2.1);
    \draw[fill=white] (2,0.0) rectangle (4,0.9);
    \draw[black!60, densely dotted, line width=0.4pt] (2,-0.45)--(2,2.55);
    \draw[black!60, densely dotted, line width=0.4pt] (4,-0.45)--(4,2.55);
  \end{tikzpicture}%
}
\newcommand{\taxcellf}[3]{\begin{tabular}{@{}c@{}}\ipairf{#1}{#2}\\[1pt]{\scriptsize #3}\end{tabular}}
\newcommand{\taxfam}[3]{\fcolorbox{black!35}{#1}{\begin{tabular}{@{}c@{}}{\scriptsize #2}\\[3pt]#3\end{tabular}}}

\begin{document}
\maketitle

\begin{abstract}
Allen's interval algebra provides a qualitative calculus for temporal relations, but its base relations are crisp predicates over exact interval boundaries. This is inadequate for temporal information extracted from language, perception, databases, or uncertain histories, where event times, durations, and boundaries are uncertain and where expressions such as \emph{just before}, \emph{roughly during}, or \emph{starting with} have graded semantics. We develop \emph{the probabilistic Allen algebra}: a generative and complete extension of Allen's calculus in which relation probabilities are \emph{derived} from distributions over interval boundaries rather than assigned as scores. Time points are Gaussian variables; intervals are represented by Gaussian midpoints and non-negative durations obtained by truncating latent Gaussian duration variables. Each relation is then a measurable boundary-ordering predicate in one common temporal probability space, rather than an independently assigned class score. Point--point relations admit closed forms using the error function and complementary error function. Point--interval and interval--interval relations become multivariate Gaussian orthant probabilities induced by linear inequalities. Equality and contact relations, such as \meets, \starts, \finishes, and \equals, are treated through confidence borders or soft membership kernels. The construction \emph{derives} Allen's taxonomy rather than positing it: the thirteen relations arise as the sign-partition cells of the boundary geometry, their coarse predicates such as precedence, overlap, and containment are unions of leaves whose probabilities are the corresponding leaf sums, and this hierarchy is the invariant preserved as intervals collapse to points and the thirteen relations reduce to five and then three. For temporal language, we distinguish contact or boundary-sharing relations from verbal grading: expressions such as \emph{shortly before}, \emph{rather before}, or \emph{long before} can be modeled by the graded strength of a relation family relative to alternatives, while \meets, \starts, and \finishes\ remain contact relations controlled by tolerance parameters. Because the Gaussian relation probabilities depend only on standardized temporal separations, jointly scaling means, durations, uncertainty, and tolerances leaves every probability unchanged: the algebra is scale-invariant. Under a single tolerance the thirteen relations form a true partition that recovers crisp Allen as the tolerance vanishes. As in well-established logical reasoning frameworks such as CIDOC~CRM, where relations are expressed through primitives, each relation decomposes into elementary boundary comparisons (temporal primitives), here lifted to a correlation-aware probabilistic form. The model therefore provides a robust bridge between metric temporal uncertainty, qualitative interval reasoning, and vague temporal adverbials in natural language. Beyond the complete theory, we provide an open, fully tested implementation that computes the relation probabilities directly.

\end{abstract}

\tableofcontents

\section{Introduction}

\subsection{Motivation}
\label{sec:motivation}

Temporal reasoning has two complementary traditions. In symbolic artificial intelligence and knowledge representation, qualitative temporal calculi abstract away metric time and reason with relations between intervals. Allen's interval algebra is the canonical example: it represents the configuration of two intervals using relations such as \before, \meets, \overlaps, \during, \starts, and \finishes\ \cite{allen1983maintaining}. In probabilistic temporal reasoning, by contrast, probability is often used to describe how states or events evolve over time, for example in temporal graphical models, stochastic processes, filtering, and probabilistic temporal logics \cite{hanks1994probabilistic}.

A different and less uniformly developed problem arises when the temporal objects themselves are uncertain. An event may be known to have happened, but its occurrence time may be imprecise; an interval may have uncertain start, end, midpoint, or duration; and a natural-language expression may communicate a qualitative temporal relation without committing to exact boundaries. Temporal database work has studied valid-time indeterminacy and probabilistic temporal data \cite{dyreson1998valid,dekhtyar2001probabilistic}, and probabilistic variants of interval algebra have been proposed \cite{mouhoub2008probabilistic,petridis2010hourglass}. However, much of this work either assigns probabilities to relations directly or treats relation prediction as a flat classification problem over the thirteen Allen labels.

Two failure modes of crisp relations motivate a graded treatment. First, relations are \emph{brittle}: a sub-threshold shift of a single boundary flips the qualitative label (Figure~\ref{fig:robustness}). Second, crisp relations are \emph{inadequate}: one label can cover markedly different configurations (Figure~\ref{fig:inadequacy}). In both cases the distribution-induced probabilities developed here vary smoothly and distinguish the situations, as the annotations in the two figures show.

\begin{figure}[t]
\centering
\includegraphics[width=\linewidth]{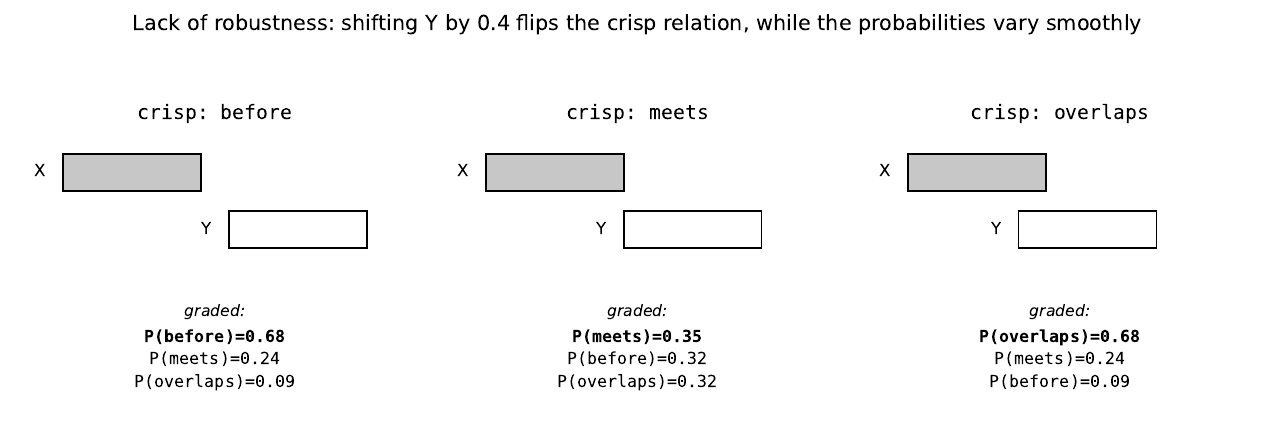}
\caption{Lack of robustness. Shifting \(Y\) by \(0.4\) flips the crisp relation \before\(\to\)\meets\(\to\)\overlaps, while the distribution-induced probabilities (computed with boundary uncertainty) vary smoothly.}
\Description{Two rows of interval bars. Top: three crisp arrangements of X and Y that differ only by a shift of Y, labelled before, meets, and overlaps. Bottom: the corresponding relation probabilities, which change smoothly across the same shift.}
\label{fig:robustness}
\end{figure}

\begin{figure}[t]
\centering
\includegraphics[width=\linewidth]{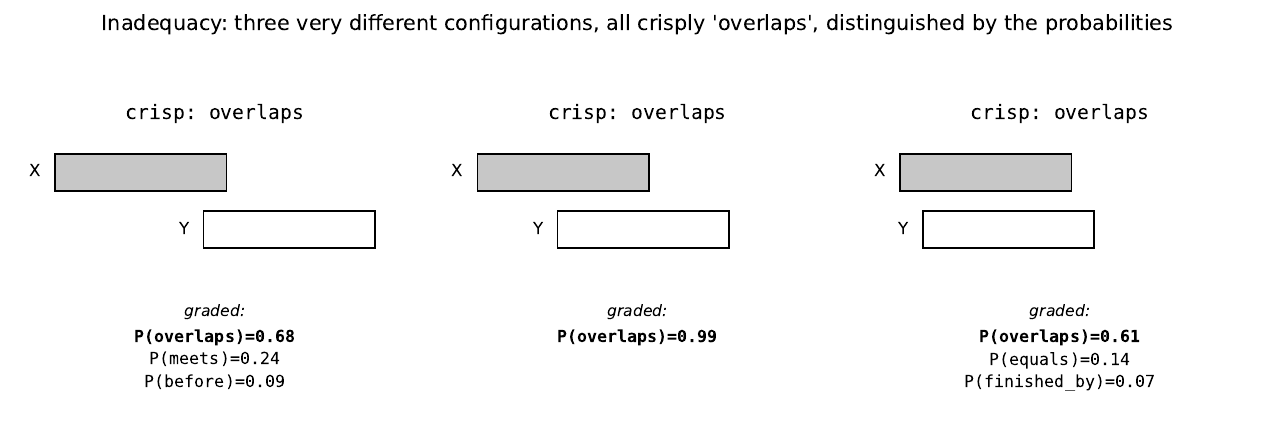}
\caption{Inadequacy. Three configurations are all crisply \overlaps, yet they differ markedly; the probabilities separate the near-\before\ case from the near-\equals\ case.}
\Description{Three interval arrangements that all carry the crisp label overlaps, with bar charts of their relation probabilities showing that one is nearly before and one nearly equals.}
\label{fig:inadequacy}
\end{figure}

This paper argues for a distribution-induced view. The primitive uncertainty lives on the quantities that specify the temporal objects: the location of a time point and, for an interval, any consistent choice among its start, end, midpoint, and duration---these parameterizations are interchangeable, and we adopt midpoint and duration (Section~\ref{sec:parameterization}; Appendix~\ref{app:parameterizations}). Because every Allen relation between two uncertain temporal objects \(X\) and \(Y\) (time points or intervals) is fixed by how their boundary points---both starts and ends---are ordered relative to one another (does \(X\) finish before \(Y\) starts? do they start together?), and each boundary is a linear function of these uncertain quantities, a relation holds on exactly a region of those quantities: the values that produce its ordering. Its probability is then how much of the distribution falls in that region. This has two consequences. First, it gives a direct probabilistic semantics to every relation \(R\):
\begin{equation}
\label{eq:generative}
    P(X\,R\,Y)=\int \mathbf 1[X\,R\,Y]\,p(X,Y)\dd X\dd Y.
\end{equation}
Equation~\eqref{eq:generative} is the conceptual core of the approach. Picture the two uncertain intervals as a recipe for producing concrete pairs: each time we draw from the uncertainty we obtain two ordinary intervals with definite start and end points. For any such definite pair a plain Allen relation either holds or it does not---either \(X\) lies entirely before \(Y\), say, or it does not. The symbol \(\mathbf 1[X\,R\,Y]\) simply records this yes/no answer: it equals \(1\) when the relation holds and \(0\) when it does not. The density \(p(X,Y)\) records how plausible each concrete pair is. The integral multiplies these two quantities and sums over all possible pairs, thereby accumulating the plausibility of exactly those pairs for which the answer is ``yes.'' The result is the probability that the relation holds.

Equivalently, and more concretely: if we generated a large collection of concrete interval pairs according to the assumed uncertainty and checked the ordinary Allen relation for each, then \(P(X\,R\,Y)\) would be the fraction of the collection in which \(R\) is true. Because every relation is a yes/no question about the same underlying pair, the thirteen probabilities are shares of one whole and add up to one; and none of them is assigned by hand---each is fixed entirely by the uncertainty placed on the time points and durations. The remainder of the paper turns this definition into closed-form expressions by taking the time points and durations to be Gaussian, under which the ``yes'' region is described by a few linear inequalities and the integral reduces to a standard multivariate Gaussian probability.

This strict probabilistic approach makes the taxonomy of temporal relations explicit. The Allen leaves are not independent classes. Coarser predicates such as precedence, non-disjointness, containment, and contact are unions of leaves, and their probabilities are obtained by summing the probabilities of their refinements. Thus, if \meets\ holds, then the coarser predicate ``precedes or touches'' also holds; if containment is probable, one can refine it into \starts, \during, \finishes, or \equals. This taxonomic view is essential for robustness. A small perturbation of boundaries can transform a crisp \before\ relation into \meets, or a \during\ relation into \starts. A flat classifier sees this as a label discontinuity. A hierarchical probabilistic model sees it as a transfer of probability mass within a stable parent predicate.

For NLP, the same hierarchy suggests a separation between qualitative relation families and verbal grading. Contact relations such as \meets\ or \starts\ encode boundary correspondence and therefore require a margin or tolerance in continuous probability spaces. By contrast, many adverbials \emph{grade} a temporal relation rather than asserting contact. Expressions such as \emph{just before}, \emph{shortly before}, \emph{rather before}, \emph{some time before}, and \emph{long before} all describe how one event stands in time relative to another---they are graded forms of the \before\ family of Allen's relations---and differ only in degree, not in whether the boundaries touch. Spatial language has the same structure: \emph{just above}, \emph{far to the left of}, and \emph{right beside} grade qualitative spatial relations in the same way. Such adverbials express the degree to which a coarse relation family, for example precedence, is supported relative to its alternatives---that is, the relation probabilities themselves. Since these probabilities are dimensionless and depend on metric distances only after normalization by temporal uncertainty and duration scale, the model naturally normalizes context-sensitivity: one month before a two-month interval can receive the same graded beforeness as one hour before a two-hour interval when the underlying parameters are scaled consistently.

\subsection{Contributions}

The contributions are:
\begin{enumerate}[leftmargin=*,label=(C\arabic*)]
    \item We formalize uncertain time points as Gaussian variables and uncertain intervals as Gaussian midpoints with non-negative truncated-Gaussian durations.
    \item We derive closed point--point probabilities using \(\operatorname{erf}\) and \(\operatorname{erfc}\), including confidence borders for equality.
    \item We derive point--interval relations as bivariate Gaussian CDFs in the general truncated-duration case, with error-function limits for deterministic durations.
    \item We derive interval--interval Allen relation probabilities as multivariate Gaussian CDFs over linear boundary inequalities.
    \item We introduce confidence borders for equality/contact relations and identify outer adjacency (\meets, \metby) and inner adjacency (\starts, \startedby, \finishes, \finishedby, \equals).
    \item We show Allen's relation taxonomy is \emph{derived, not stipulated}: the thirteen leaves are the sign-partition cells of the boundary-difference geometry, their coarse predicates are unions of leaves whose probabilities are the leaf sums, and this hierarchy is the invariant preserved as intervals collapse to points.
    \item We position the approach relative to Allen's Hourglass, arguing that our model gives distributional semantics and a principled treatment of equality/contact while preserving the robustness motivation.
    \item We discuss the bidirectional mapping between metric temporal uncertainty, graded relation membership, and vague temporal adverbials in NLP.
    \item We identify scale invariance as a desirable normalization property: verbal grading can be tied to relation probabilities rather than absolute distances, thereby reducing context-sensitivity across temporal scales.
\end{enumerate}

\section{Previous Work}

\subsection{Allen Interval Algebra}

Allen's interval algebra defines thirteen basic relations between two intervals \(X=[a_X,b_X]\) and \(Y=[a_Y,b_Y]\), assuming \(a_X\le b_X\) and \(a_Y\le b_Y\) \cite{allen1983maintaining}. The relations are mutually exclusive and jointly exhaustive for definite intervals. Table~\ref{tab:allen-relations} gives the standard orientation \(X\,R\,Y\), with the boundary ordering and an example arrangement (\(X\) filled, \(Y\) outlined) for each.

\begin{table}[h]
\centering
\begin{tabular}{l l c}
\toprule
Relation & Boundary ordering & Example \\
\midrule
\before       & \(a_X<b_X<a_Y<b_Y\) & \ipair{0}{2}{3}{5} \\
\meets        & \(a_X<b_X=a_Y<b_Y\) & \ipair{0}{2}{2}{4} \\
\overlaps     & \(a_X<a_Y<b_X<b_Y\) & \ipair{0}{3}{2}{5} \\
\starts       & \(a_X=a_Y<b_X<b_Y\) & \ipair{0}{2}{0}{4} \\
\during       & \(a_Y<a_X<b_X<b_Y\) & \ipair{1}{3}{0}{5} \\
\finishes     & \(a_Y<a_X<b_X=b_Y\) & \ipair{2}{4}{0}{4} \\
\equals       & \(a_X=a_Y<b_X=b_Y\) & \ipair{0}{4}{0}{4} \\
\finishedby   & \(a_X<a_Y<b_X=b_Y\) & \ipair{0}{4}{2}{4} \\
\contains     & \(a_X<a_Y<b_Y<b_X\) & \ipair{0}{5}{1}{3} \\
\startedby    & \(a_X=a_Y<b_Y<b_X\) & \ipair{0}{4}{0}{2} \\
\overlappedby & \(a_Y<a_X<b_Y<b_X\) & \ipair{2}{5}{0}{3} \\
\metby        & \(a_Y<b_Y=a_X<b_X\) & \ipair{2}{4}{0}{2} \\
\after        & \(a_Y<b_Y<a_X<b_X\) & \ipair{3}{5}{0}{2} \\
\bottomrule
\end{tabular}
\caption{Allen's thirteen base relations: boundary ordering and an example arrangement of \(X\) (filled) and \(Y\) (outlined).}
\label{tab:allen-relations}
\end{table}

Allen's calculus is usually described as qualitative: it suppresses metric distances while retaining the ordering of boundaries. Subsequent work has studied composition tables, tractable subclasses, constraint propagation, and encodings for planning and scheduling \cite{janhunen2019allen}; representative results characterize maximal tractable subclasses of Allen's algebra, including extensions that admit metric time \cite{drakengren1997eight}, and carry the same constraint-based reasoning to qualitative spatial calculi \cite{renz2001efficient}. These methods typically assume definite intervals or sets of possible qualitative relations, rather than continuous probability distributions over boundaries: they reason about \emph{which} qualitative relations are consistent, whereas we assign each relation a probability derived from a distribution over the boundaries themselves.

\subsection{Probabilistic Temporal Reasoning and Temporal Indeterminacy}

The phrase probabilistic temporal reasoning is used in at least two senses. The first studies stochastic systems evolving over time: hidden Markov models, dynamic Bayesian networks, Kalman filters, temporal point processes, and stochastic temporal logics. These models ask which state holds, which event occurs, or which temporal property is satisfied.

The second sense concerns uncertainty in temporal objects themselves. Valid-time indeterminacy represents facts whose occurrence time is uncertain \cite{dyreson1998valid}. Probabilistic temporal databases extend this idea toward algebraic operations over temporally uncertain tuples \cite{dekhtyar2001probabilistic}. This second sense is closer to the present work, because the question is not only whether an event occurs, but when it occurred and how its temporal extent relates to another uncertain extent.

\subsection{Probabilistic Extensions of Allen Relations}

Mouhoub and Liu propose a probabilistic interval algebra for managing uncertain temporal relations \cite{mouhoub2008probabilistic}, and probabilistic interval networks attach a probability to each Allen relation and propagate them through a constraint network \cite{ryabov2004probabilistic}. Such work typically treats probabilities as annotations on qualitative relations or constraints. That is useful for temporal constraint networks, but differs from the distribution-induced approach here: we do not assume relation probabilities as primitives; we derive them from distributions over points and durations.

A parallel line softens Allen's relations with fuzzy set theory rather than probability. Badaloni and Giacomin propose a fuzzy extension of Allen's interval algebra in which each of the thirteen base relations carries a graded preference degree, later developed into the constraint algebra \(\mathrm{IA}_{\mathrm{fuz}}\) for qualitative fuzzy temporal reasoning \cite{badaloni2000fuzzy,badaloni2006iafuz}; Schockaert et al. fuzzify the interval relations to compare intervals with imprecise boundaries \cite{schockaert2008fuzzifying}. These approaches share our robustness motivation---a sub-threshold shift of one boundary should not flip a hard label---but the grading is supplied by membership functions chosen on the relations, not induced by a generative model over the underlying metric quantities, and the resulting degrees do not in general form a single probability distribution that sums to one across the thirteen relations. A complementary line in formal semantics models vagueness probabilistically rather than through fixed memberships---Lassiter and Goodman interpret a gradable adjective by Bayesian inference over an uncertain threshold~\cite{lassiter2017adjectival}---but there too the grading attaches to a predicate against a threshold, not to interval relations induced by a distribution over boundary geometry.

Allen's Hourglass is especially relevant \cite{petridis2010hourglass}. It diagnoses the brittleness of crisp Allen relations under noisy intervals and proposes a two-dimensional mapping based on relative position and relative size. The hourglass representation provides a robust probabilistic treatment of relation assignment and makes some symmetries of the Allen relations visually explicit. Its probabilities, however, are defined through primitive position and size relation functions rather than through a generative probability model over boundaries, midpoints, and durations. Moreover, equality-based relations such as contact are intrinsically delicate in continuous probability spaces. Our confidence-border treatment directly addresses this issue by replacing zero-measure boundary equalities with tolerance bands or soft membership kernels.

\subsection{Usage and Applications}

Temporal relations are central in NLP, planning, scheduling, robotics, healthcare, and event recognition. TimeML and TempEval provide annotation and evaluation frameworks for temporal expressions, events, and relations in text \cite{pustejovsky2003timeml,verhagen2007tempeval}. Bayesian chronology tools also compute empirical probabilities of Allen relations from posterior samples of uncertain periods \cite{dye2015sequence,archaeophases2025allen}. In temporal relation extraction, the difficulty of distinguishing fine Allen relations has motivated coarse-to-fine and aggregated label schemes \cite{vashishtha2019finegrained,huang2023unified}, which our taxonomy renders principled as conditional refinement within a parent predicate. These works demonstrate the utility of probabilistic temporal relations, but they do not provide a general analytic Gaussian algebra for all point, point--interval, and interval--interval cases.

Vague temporal adverbials are a particularly relevant application. Words such as \emph{recently}, \emph{just}, \emph{shortly}, and \emph{long ago} specify graded constraints over temporal distance and relation. Recent work has modeled such adverbials probabilistically as distributions conditioned by event type \cite{kenneweg2025factorized}. A distribution-induced Allen algebra provides a mechanism for mapping between such graded linguistic categories and concrete temporal probability distributions.

A central modeling choice is whether these adverbials should be tied to absolute temporal distances or to confidence in qualitative relation families. Absolute-distance accounts are strongly context-sensitive: \emph{shortly before} may mean seconds in one domain and months in another. In our framework, relation probabilities provide a dimensionless intermediate representation. The modifier \emph{shortly} can be associated with a weak but dominant precedence probability, while \emph{long before} can be associated with near-certain precedence, optionally combined with a context-specific expected gap.

\section{Approach}

\subsection{Overview: From the Generative Definition to Closed Forms}
\label{sec:roadmap}

The construction turns the generative definition into closed-form probabilities in four steps (Figure~\ref{fig:pipeline}); the remainder of this section develops each, and we carry one relation, \overlaps, through all four (Section~\ref{sec:interval-interval}).
\begin{enumerate}
\item \emph{Definition.} A relation's probability is the mass of its indicator under the joint density, \(P(X\,R\,Y)=\int \mathbf 1[X\,R\,Y]\,p(X,Y)\dd X\dd Y\) (Equation~\eqref{eq:generative}).
\item \emph{Region.} The indicator is a system of linear inequalities in the four boundary differences \(A,B,G,H\)---one row of Table~\ref{tab:prob-allen-ineq} per relation, with a tolerance \(\tau\) for contact (Section~\ref{sec:interval-interval}).
\item \emph{Substitution.} The boundaries, hence the differences, are affine in a latent Gaussian vector \(\bm U\); the region becomes a polytope in \(\bm U\), and the integral a conditional Gaussian mass \(P(R)=\Prob(L_R(\bm U)\le0,\ \bm D\ge0)/Q\), writing \(\Prob\) for the probability of a latent event (Equation~\eqref{eq:orthant}; the same map underlies any parameterization, Appendix~\ref{app:parameterizations}).
\item \emph{Evaluation.} A Gaussian mass over a polytope is a multivariate-normal CDF \(\Phi_k\) (the \(k\)-variate generalization of the error function), reducing to \(\erfop/\erfcop\) for two points (Section~\ref{sec:point-point}) and to the bivariate \(\Phi_2\) for a point and an interval (Section~\ref{sec:point-interval}).
\end{enumerate}

\begin{figure}[t]
\centering
\begin{tikzpicture}[
  >=Stealth,
  stp/.style={draw, rounded corners, align=center, font=\small, inner sep=4pt,
              text width=27mm, minimum height=16mm}
]
  \node[stp] (def) {\textbf{1.\ Definition}\\[2pt]$P(X\,R\,Y)=\int\mathbf 1[X\,R\,Y]\,p$};
  \node[stp, right=6mm of def] (reg) {\textbf{2.\ Region}\\[2pt]$\mathbf 1[X\,R\,Y]=\{L_R\le0\}$\\in $A,B,G,H$ (Tab.~\ref{tab:prob-allen-ineq})};
  \node[stp, right=6mm of reg] (sub) {\textbf{3.\ Substitution}\\[2pt]polytope in $\bm U$:\\$\Prob(L_R\le0,\bm D\ge0)/Q$};
  \node[stp, right=6mm of sub] (ev) {\textbf{4.\ Evaluation}\\[2pt]Gaussian mass\\$=\Phi_k$ ($\erfop/\erfcop$)};
  \draw[->] (def)--(reg); \draw[->] (reg)--(sub); \draw[->] (sub)--(ev);
\end{tikzpicture}
\caption{From the generative definition to a closed form in four steps, threaded by the running \overlaps\ example (this section; the inequality table is Table~\ref{tab:prob-allen-ineq}, the orthant form Equation~\eqref{eq:orthant}).}
\Description{A four-box flow diagram: generative definition of a relation probability, its region as a set of linear inequalities in the boundary differences, substitution of the interval parameters, and evaluation as a multivariate Gaussian probability.}
\label{fig:pipeline}
\end{figure}

\subsection{Uncertain Times and Durations}
\label{sec:representation}

A time point \(x\) is represented by
\begin{equation}
    t_x\sim \N(\mu_x,\sigma_x^2).
\end{equation}
An interval \(X\) is represented by a midpoint \(t_X\) and non-negative duration \(d_X\):
\begin{equation}
    X=[a_X,b_X]
      =\left[t_X-\frac{d_X}{2},\ t_X+\frac{d_X}{2}\right].
\end{equation}
The midpoint is Gaussian,
\begin{equation}
    t_X\sim\N(\mu_X,\sigma_X^2),
\end{equation}
and the duration is a lower-truncated latent Gaussian,
\begin{equation}
    d_X=D_X\given D_X\ge0,
    \qquad
    D_X\sim\N(\mu_{d_X},\sigma_{d_X}^2).
\end{equation}
For \(\sigma_{d_X}>0\), and writing \(\PhiCDF\) for the standard normal cumulative distribution function, the truncation normalizer is
\begin{equation}
    Q_X=\Prob(D_X\ge0)=\PhiCDF\left(\frac{\mu_{d_X}}{\sigma_{d_X}}\right).
\end{equation}
For \(\sigma_{d_X}=0\), the duration is deterministic and the formula is interpreted as a limit.

We carry two forms of the duration throughout, and the distinction matters below. The lowercase \(d_X\ge0\) is the \emph{duration itself}---the non-negative quantity that enters the boundaries \(a_X,b_X\) and hence the boundary differences \(A,B,G,H\). The uppercase \(D_X\) is its \emph{untruncated latent}, which may be negative but is genuinely Gaussian, so that the vector collecting the latent quantities stays jointly Gaussian and every relation probability is a closed-form multivariate-normal orthant integral (Section~\ref{sec:interval-interval}); non-negativity is imposed by conditioning on \(\{D_X\ge0\}\), with normalizer \(Q_X\). On that admissible event the two coincide, \(d_X=D_X\), so the differences written in \(d\) equal their latent representation in \(D\).

Unless stated otherwise, all primitive Gaussian variables are independent. Correlated temporal quantities can be handled by replacing diagonal covariance matrices with the appropriate joint covariance matrix.

\paragraph{Running example.} We thread one example through the paper. Let \(X\) be an uncertain \emph{storm} and \(Y\) an uncertain \emph{power outage}, in hours, with
\[
X:\ \mu_X=2,\ \sigma_X=0.5,\ \mu_{d_X}=4,\ \sigma_{d_X}=0.5;
\qquad
Y:\ \mu_Y=3,\ \sigma_Y=0.6,\ \mu_{d_Y}=3,\ \sigma_{d_Y}=0.6,
\]
so the storm spans roughly \([0,4]\,\mathrm{h}\) and the outage roughly \([1.5,4.5]\,\mathrm{h}\). A report might say ``the power went out during the storm,'' but the times are uncertain. We return to this example to read off graded relations and their coarse-predicate and primitive structure (Section~\ref{sec:case-study}).

\subsection{Placing the Uncertainty: Which Quantities Are Known}
\label{sec:parameterization}

An interval has two degrees of freedom, so a model is fixed by placing Gaussian uncertainty on \emph{two} of its temporal quantities and deriving the rest. The natural pair is a temporal anchor together with the duration, and which anchor to choose is dictated by what the source actually pins down. Natural language makes the choice vivid: each of the following utterances constrains a different pair and leaves the complementary quantity uncertain.

\begin{itemize}[leftmargin=1.4em]
    \item \emph{Start and duration.} ``I went there on 11 February and stayed for about two months.'' The start is sharp and the duration is vague, so the \emph{end} is the uncertain quantity; model it with a tight standard deviation on the start and a loose one on the duration.
    \item \emph{Midpoint and duration.} ``I was there for ten days during the summer.'' Here the duration is sharp but its \emph{position} is vague---the ten days sit somewhere in summer---so the uncertainty lives on the midpoint. This is the parameterization used throughout the body.
    \item \emph{End and duration.} ``After a week's holiday I got home on Wednesday evening.'' The end is anchored and the duration is roughly a week, so the \emph{start} is what floats.
\end{itemize}

All three describe the same kind of object and produce the same Allen machinery: the duration \(d\ge0\) is truncated as above, the chosen anchor is Gaussian, the boundaries are affine in the two, and every relation remains a conditional Gaussian orthant probability. They differ only in \emph{which} two quantities are taken as independent and Gaussian, and hence in the correlation they induce between midpoint and duration---zero for the midpoint pair, \(+\sigma_d^2/2\) for start--duration, and \(-\sigma_d^2/2\) for end--duration. Appendix~\ref{app:parameterizations} gives the explicit boundary maps, and the accompanying implementation provides dedicated constructors for each interval specification---by start, end, or midpoint together with the duration. The modelling task is therefore only to name the two quantities the data constrains and attach an uncertainty to each; the relation probabilities then follow.

\subsection{A Common Linear-Inequality Semantics}

The central observation is that all Allen relations are conjunctions of linear boundary inequalities. If boundaries are affine functions of Gaussian variables, each relation \(R\) is an event of the form \(\bm A_R\bm U\le\bm b_R\), where \(\bm U\) is a Gaussian vector containing time points and latent durations. Folding the right-hand side into the map, we write this system throughout as
\begin{equation}
    L_R(\bm U)\le\bm 0,\qquad L_R(\bm U)=\bm A_R\bm U-\bm b_R,
    \label{eq:LR-def}
\end{equation}
and use the \(L_R\) form from here on. Its coefficient matrix and offset factor as \(\bm A_R=S_R M\) and \(\bm b_R\): the interval parameterization enters only through the difference map \(M\), the relation only through the signed selector \(S_R\) read off its signature (Table~\ref{tab:primitive-signs}); Appendix~\ref{app:parameterizations} develops this decomposition. If some duration variables are truncated, relation probabilities are conditional Gaussian orthant probabilities:
\begin{equation}
    P(R\given D_i\ge0)
    =\frac{\Prob\big(L_R(\bm U)\le\bm 0,\ D_i\ge0\ \forall i\big)}{\Prob(D_i\ge0\ \forall i)}.
\end{equation}
This formulation is more general than the closed forms below and is the basis for numerical implementation.

\subsection{Relations Between Two Gaussian Points}
\label{sec:point-point}

Let
\begin{equation}
    t_x\sim\N(\mu_x,\sigma_x^2),\qquad
    t_y\sim\N(\mu_y,\sigma_y^2),
\end{equation}
independently. Define
\begin{equation}
    \Delta=\mu_y-\mu_x,
    \qquad
    \sigma_{xy}=\sqrt{\sigma_x^2+\sigma_y^2},
\end{equation}
so that \(Z=t_y-t_x\sim\N(\Delta,\sigma_{xy}^2)\). Then
\begin{equation}
\begin{aligned}
    P(x<y)
    &=\Prob(Z>0) \\
    &=\PhiCDF\left(\frac{\Delta}{\sigma_{xy}}\right) \\
    &=\frac12\left[1+\erfop\left(\frac{\Delta}{\sqrt2\sigma_{xy}}\right)\right]
      =\frac12\erfcop\left(-\frac{\Delta}{\sqrt2\sigma_{xy}}\right),
\end{aligned}
\end{equation}
and
\begin{equation}
\begin{aligned}
    P(x>y)
    &=\Prob(Z<0) \\
    &=\PhiCDF\left(-\frac{\Delta}{\sigma_{xy}}\right) \\
    &=\frac12\left[1-\erfop\left(\frac{\Delta}{\sqrt2\sigma_{xy}}\right)\right]
      =\frac12\erfcop\left(\frac{\Delta}{\sqrt2\sigma_{xy}}\right).
\end{aligned}
\end{equation}
Exact equality has probability zero for non-degenerate continuous Gaussians.

\subsection{Confidence Borders and Graded Membership}
\label{sec:confidence-borders}

Allen relations involving equality, for example \meets, \starts, \finishes, and \equals, occupy lower-dimensional subsets of the continuous configuration space. Their exact probabilities are therefore zero unless deterministic boundary atoms are present. For practical reasoning, and especially for vague linguistic categories, exact equality should be replaced by a confidence border or membership function.

With a hard confidence border \(\tau\ge0\),\footnote{We use the \emph{same} tolerance \(\tau\) for the strict separation margins and for the equality/contact bands. This identity is what makes the thirteen relations a partition whose probabilities sum to one (Proposition~\ref{prop:partition}): each boundary difference \(v\) is split into \(v>\tau\), \(|v|\le\tau\), and \(v<-\tau\), and these three tile the line without gap or overlap only when the band half-width equals the separation margin. A distinct contact tolerance \(\tau_{\mathrm{eq}}\neq\tau\) is admissible but forfeits the partition---a narrower band leaves a gap (total mass \(<1\)), a wider band overlaps (total mass \(>1\)).} approximate equality is
\begin{equation}
    x\approx_{\tau}y
    \quad\Longleftrightarrow\quad
    |t_y-t_x|\le \tau.
\end{equation}
Its probability is
\begin{equation}
\begin{aligned}
    P(x\approx_{\tau}y)
    &=\PhiCDF\left(\frac{\tau-\Delta}{\sigma_{xy}}\right)
      -\PhiCDF\left(\frac{-\tau-\Delta}{\sigma_{xy}}\right) \\
    &=\frac12\left[
      \erfop\left(\frac{\tau-\Delta}{\sqrt2\sigma_{xy}}\right)
      -\erfop\left(\frac{-\tau-\Delta}{\sqrt2\sigma_{xy}}\right)
      \right].
\end{aligned}
\end{equation}
This equality border should be distinguished from the verbal grading of non-contact relations. A tolerance \(\tau\) is required for contact relations because exact boundary equality has zero probability in a continuous model. It is not, however, the primary representation of expressions such as \emph{shortly before}. Those expressions can be modeled by the strength of the coarser precedence predicate relative to overlap and after.

A soft alternative is a membership kernel \(m_{\mathrm{eq}}(z)\in[0,1]\), for example Gaussian or logistic, and the graded truth becomes
\begin{equation}
    \E[m_{\mathrm{eq}}(t_y-t_x)].
\end{equation}
The hard-border formulas are used below because they remain expressible as Gaussian CDFs over linear inequalities.

\subsection{Relations Between a Point and an Interval}
\label{sec:point-interval}

Let \(x\) be a Gaussian point and \(Y\) an uncertain interval. Define
\begin{equation}
    Z=t_Y-t_x,
    \qquad
    \Delta_{Yx}=\mu_Y-\mu_x,
    \qquad
    \sigma_Z^2=\sigma_Y^2+\sigma_x^2.
\end{equation}
The interval boundaries are \(a_Y=t_Y-d_Y/2\) and \(b_Y=t_Y+d_Y/2\). The point is before the interval iff \(t_x<a_Y\), equivalently \(Z>d_Y/2\); it is after iff \(Z<-d_Y/2\); and it is inside iff \(|Z|\le d_Y/2\).

For \(D_Y\sim\N(\mu_{d_Y},\sigma_{d_Y}^2)\) conditioned on \(D_Y\ge0\), define
\begin{equation}
    Q_Y=\PhiCDF\left(\frac{\mu_{d_Y}}{\sigma_{d_Y}}\right),
    \qquad
    \alpha_Y=\frac{\mu_{d_Y}}{\sigma_{d_Y}},
\end{equation}
\begin{equation}
    \sigma_{pI}=\sqrt{\sigma_Z^2+\frac14\sigma_{d_Y}^2},
    \qquad
    \rho_Y=-\frac{\sigma_{d_Y}}{2\sigma_{pI}}.
\end{equation}
Let \(\Phi_2(a,b;\rho)\) denote the standard bivariate normal CDF with correlation \(\rho\). For a separation margin \(\tau\ge0\), define
\begin{equation}
    \beta^{pI}_{+}(\tau)
    =\frac{\Delta_{Yx}-\tau-\frac12\mu_{d_Y}}{\sigma_{pI}},
    \qquad
    \beta^{pI}_{-}(\tau)
    =\frac{-\Delta_{Yx}-\tau-\frac12\mu_{d_Y}}{\sigma_{pI}}.
\end{equation}
Then
\begin{equation}
    P^{pI}_{\mathrm{before}}(\tau)
    =\Prob(t_x<a_Y-\tau)
    =\frac{\Phi_2(\alpha_Y,\beta^{pI}_{+}(\tau);\rho_Y)}{Q_Y},
\end{equation}
\begin{equation}
    P^{pI}_{\mathrm{after}}(\tau)
    =\Prob(t_x>b_Y+\tau)
    =\frac{\Phi_2(\alpha_Y,\beta^{pI}_{-}(\tau);\rho_Y)}{Q_Y}.
\end{equation}
The point-in-interval probability is
\begin{equation}
    P^{pI}_{\mathrm{inside}}
    =1-P^{pI}_{\mathrm{before}}(0)-P^{pI}_{\mathrm{after}}(0).
\end{equation}
Boundary correspondence uses \(\tau\):
\begin{equation}
    \Prob(|t_x-a_Y|\le\tau),
    \qquad
    \Prob(|t_x-b_Y|\le\tau),
\end{equation}
which are again bivariate Gaussian CDF differences because \(t_x-a_Y\) and \(t_x-b_Y\) are affine functions of \((Z,D_Y)\).

\subsection{Relations Between Two Intervals: Distribution-Induced Allen Algebra}
\label{sec:interval-interval}

Let
\begin{equation}
    X=\left[t_X-\frac{d_X}{2},\ t_X+\frac{d_X}{2}\right],
    \qquad
    Y=\left[t_Y-\frac{d_Y}{2},\ t_Y+\frac{d_Y}{2}\right].
\end{equation}
Define
\begin{equation}
    Z=t_Y-t_X,
    \qquad
    \Delta=\mu_Y-\mu_X,
    \qquad
    \sigma_Z^2=\sigma_X^2+\sigma_Y^2.
\end{equation}
The four boundary differences are
\begin{align}
    A &= a_Y-a_X = Z+\frac{d_X-d_Y}{2}, \\
    B &= b_Y-b_X = Z-\frac{d_X-d_Y}{2}, \\
    G &= a_Y-b_X = Z-\frac{d_X+d_Y}{2}, \\
    H &= a_X-b_Y = -Z-\frac{d_X+d_Y}{2}.
\end{align}
Here \(G\) is the gap from the end of \(X\) to the start of \(Y\), and \(H\) is the converse gap.

On the admissible event \(\{D_X\ge0,\ D_Y\ge0\}\) the durations coincide with their untruncated latents, \(d_X=D_X\) and \(d_Y=D_Y\), so the four differences are affine in the latent vector \(\bm U=(Z,D_X,D_Y)\transpose\), which is jointly Gaussian with mean \(\bm\mu_U=(\Delta,\mu_{d_X},\mu_{d_Y})\) and, in the independent case, covariance \(\bm\Sigma_U=\operatorname{diag}(\sigma_Z^2,\sigma_{d_X}^2,\sigma_{d_Y}^2)\).

The system \(L_R(\bm U)\le0\) of a relation is read directly from Table~\ref{tab:prob-allen-ineq}: put each inequality listed there in the form \({\le}\,0\) and expand it through the differences \(A,B,G,H\) above. A strict margin such as \(G>\tau\) contributes a single affine row \(\tau-G\le0\); a contact band such as \(|A|\le\tau\) contributes two rows, \(A-\tau\le0\) and \(-A-\tau\le0\). For \overlaps\ (the row \(A>\tau,\ B>\tau,\ G<-\tau\) of Table~\ref{tab:prob-allen-ineq}),
\begin{equation}
  L_{\overlaps}(\bm U)=
  \begin{pmatrix}\tau-A\\[1pt]\tau-B\\[1pt]G+\tau\end{pmatrix}
  =\begin{pmatrix}\tau-Z-\tfrac12(D_X-D_Y)\\[2pt]\tau-Z+\tfrac12(D_X-D_Y)\\[2pt]Z-\tfrac12(D_X+D_Y)+\tau\end{pmatrix}\le\bm 0 ,
\end{equation}
each row affine in \(\bm U\); adjoining the truncation rows \(-D_X\le0\) and \(-D_Y\le0\) completes the system \(L_R(\bm U)=\bm A_R\bm U-\bm b_R\le\bm0\) of Equation~\eqref{eq:LR-def}. The relation's probability is the Gaussian mass of that region,
\begin{equation}
    P(R)=\frac{\Prob(L_R(\bm U)\le0,\ D_X\ge0,\ D_Y\ge0)}{\Prob(D_X\ge0)\Prob(D_Y\ge0)}.
    \label{eq:orthant}
\end{equation}
The numerator is a multivariate Gaussian CDF, obtained by standardizing the stacked rows by their induced mean \(\bm A_R\bm\mu_U-\bm b_R\) and covariance \(\bm A_R\bm\Sigma_U\bm A_R\transpose\). Reading a row of Table~\ref{tab:prob-allen-ineq}, substituting the differences \(A,B,G,H\), and standardizing is thus the complete recipe that reproduces the concrete closed form of any of the thirteen relations. In one or two active dimensions this CDF is an ordinary error function: \before\ and \after\ reduce to a single tail \(\tfrac12\erfcop(\cdot)\) (Section~\ref{sec:point-point}) and the point--interval relations to a bivariate \(\Phi_2\) (Section~\ref{sec:point-interval}); the running example \overlaps, whose three rows \(\{\tau-A,\ \tau-B,\ G+\tau\}\le0\) are active, evaluates to the trivariate normal probability \(\Phi_3\) of those standardized rows. Equation~\eqref{eq:generative}, Table~\ref{tab:prob-allen-ineq}, Equation~\eqref{eq:orthant}, and this evaluation are precisely the four steps of the roadmap (Section~\ref{sec:roadmap}).

\subsubsection{The Tolerance Partition}
\label{sec:partition}

A single tolerance \(\tau\ge0\) governs both the separation margins of the full-dimensional relations and the half-width of the equality/contact bands. Introduce the thresholded sign
\begin{equation}
    \operatorname{sgn}_\tau(v)=
    \begin{cases}
        +1 & v>\tau,\\
        \phantom{+}0 & |v|\le\tau,\\
        -1 & v<-\tau.
    \end{cases}
\end{equation}
The pair \((\operatorname{sgn}_\tau A,\operatorname{sgn}_\tau B)\) partitions the boundary-difference plane into nine cells; the two diagonal corners are split by \(\operatorname{sgn}_\tau G\) and \(\operatorname{sgn}_\tau H\) into separation/contact triples. Table~\ref{tab:prob-allen-ineq} lists the thirteen resulting inequality systems.

\begin{longtable}{lll}
\caption{The thirteen relations as a single-tolerance partition in boundary differences.}\label{tab:prob-allen-ineq}\\
\toprule
Relation & Inequality form & Interpretation \\
\midrule
\endfirsthead
\toprule
Relation & Inequality form & Interpretation \\
\midrule
\endhead
\before & \(G>\tau\) & \(X\) ends before \(Y\) starts \\
\meets & \(A>\tau,\ B>\tau,\ |G|\le\tau\) & external contact from \(X\) to \(Y\) \\
\overlaps & \(A>\tau,\ B>\tau,\ G<-\tau\) & \(X\) starts before \(Y\) and ends inside \(Y\) \\
\starts & \(|A|\le\tau,\ B>\tau\) & shared start, \(X\) shorter \\
\during & \(A<-\tau,\ B>\tau\) & \(X\) properly inside \(Y\) \\
\finishes & \(A<-\tau,\ |B|\le\tau\) & shared finish, \(X\) shorter \\
\equals & \(|A|\le\tau,\ |B|\le\tau\) & shared start and finish \\
\finishedby & \(A>\tau,\ |B|\le\tau\) & shared finish, \(Y\) shorter \\
\contains & \(A>\tau,\ B<-\tau\) & \(Y\) properly inside \(X\) \\
\startedby & \(|A|\le\tau,\ B<-\tau\) & shared start, \(Y\) shorter \\
\overlappedby & \(A<-\tau,\ B<-\tau,\ H<-\tau\) & \(Y\) starts before \(X\) and ends inside \(X\) \\
\metby & \(A<-\tau,\ B<-\tau,\ |H|\le\tau\) & external contact from \(Y\) to \(X\) \\
\after & \(H>\tau\) & \(X\) starts after \(Y\) ends \\
\bottomrule
\end{longtable}

\begin{proposition}[Tolerance partition]
\label{prop:partition}
For every \(\tau\ge0\) and every joint law of the boundaries with \(a_X\le b_X\) and \(a_Y\le b_Y\) almost surely, the thirteen relations of Table~\ref{tab:prob-allen-ineq} are pairwise disjoint and cover the sample space. Hence their probabilities sum to one, and as \(\tau\to0^+\) they converge to the crisp Allen relations, with the seven equality/contact relations having probability zero under continuous boundary distributions.
\end{proposition}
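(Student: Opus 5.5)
The plan is a pointwise argument: I will show the thirteen inequality systems of Table~\ref{tab:prob-allen-ineq} partition the set of all boundary configurations with \(a_X\le b_X\), \(a_Y\le b_Y\). Since this holds on an almost-sure event, additivity then gives that the probabilities sum to one. The limit statement will follow from continuity of measure.

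\emph{Step 1 (order relations among the differences).} From the definitions, \(A-G=b_X-a_X=d_X\ge0\) and \(B-G=b_Y-a_Y=d_Y\ge0\). Likewise \(H=-A-d_Y\) and \(H=-B-d_X\). Hence on the admissible event
\[
G\le\min(A,B),\qquad H\le\min(-A,-B).
\]
Two consequences are the key to the proof. First, \(G>\tau\) implies \(A>\tau\) and \(B>\tau\). Second, \(H>\tau\) implies \(A<-\tau\) and \(B<-\tau\). So \before\ lies entirely inside the cell \((\operatorname{sgn}_\tau A,\operatorname{sgn}_\tau B)=(+1,+1)\), and \after\ lies inside the cell \((-1,-1)\).

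\emph{Step 2 (cell decomposition).} For each real \(v\), exactly one of \(v>\tau\), \(|v|\le\tau\), \(v<-\tau\) holds, so \(\operatorname{sgn}_\tau\) is a well-defined three-valued function. Therefore the pair \((\operatorname{sgn}_\tau A,\operatorname{sgn}_\tau B)\) partitions the configurations into nine cells.

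The seven non-corner cells are assigned one-to-one to \starts, \during, \finishes, \equals, \finishedby, \contains, \startedby. This is read off directly from Table~\ref{tab:prob-allen-ineq}.

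The corner \((+1,+1)\) is split by \(\operatorname{sgn}_\tau G\) into three pieces. The pieces \(G<-\tau\) and \(|G|\le\tau\) are exactly \overlaps\ and \meets. The piece \(G>\tau\) equals \before, because Step 1 shows the extra conditions \(A>\tau,B>\tau\) are implied. The corner \((-1,-1)\) is handled symmetrically with \(H\), giving \overlappedby, \metby, \after.

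This assignment already gives both halves of the claim. Every configuration falls in exactly one cell and, within a corner, exactly one sub-piece, so the relations cover the space. Conversely, each relation's system forces a unique cell, using Step 1 for \before\ and \after\ since their rows do not mention \(A,B\); hence the relations are pairwise disjoint. Taking expectations of the thirteen indicators, which sum to \(1\) almost surely, gives total probability one.

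\emph{Step 3 (limit \(\tau\to0^+\)).} As \(\tau\downarrow0\), each strict event \(\{v>\tau\}\) increases to \(\{v>0\}\), and each band \(\{|v|\le\tau\}\) decreases to \(\{v=0\}\). Every relation event is a finite intersection of such monotone families in the same direction within each row. Because the strict and band conditions in one row are not monotone in the same direction, I will argue directly that the indicator of each relation converges pointwise, as \(\tau\to0^+\), to the indicator of the corresponding crisp Allen ordering of Table~\ref{tab:allen-relations}. For fixed differences with \(v\ne0\), \(\operatorname{sgn}_\tau v\) is eventually \(\operatorname{sgn} v\); for \(v=0\) it is \(0\) for all \(\tau\). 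Dominated convergence then yields convergence of the probabilities. The crisp contact relations are contained in events \(\{v=0\}\) for some \(v\in\{A,B,G,H\}\), so they have probability zero when these differences have atomless laws, as under the Gaussian model with \(\sigma_Z>0\).

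The main obstacle is the \before/\after\ rows. Their systems omit the \(A,B\) conditions, so disjointness from the non-corner relations is not visible from the table. It rests entirely on the inequalities \(G\le\min(A,B)\) and \(H\le\min(-A,-B)\) of Step 1, which use \(d_X,d_Y\ge0\). This is exactly where the hypothesis \(a_X\le b_X\), \(a_Y\le b_Y\) almost surely enters.

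A secondary point is the crisp comparison for degenerate durations \(d=0\), where Allen's strict orderings \(a<b\) fail. I will note that this case is a null event whenever the durations are continuous, so it does not affect the limit probabilities.
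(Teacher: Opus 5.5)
Your proof is correct and follows the paper's argument: you partition the space into the nine \((\operatorname{sgn}_\tau A,\operatorname{sgn}_\tau B)\) cells, split the two diagonal corners by \(\operatorname{sgn}_\tau G\) and \(\operatorname{sgn}_\tau H\), and obtain the limit from the bands collapsing to null sets. Your Step~1 inequalities \(G\le\min(A,B)\) and \(H\le\min(-A,-B)\) spell out the entailment the paper uses tacitly when it places \before\ and \after\ in the corner cells (it makes this explicit only in Appendix~\ref{app:factorization}), and your pointwise dominated-convergence argument is a more careful version of the paper's one-line limit remark.
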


\begin{proof}
The maps \(a=\operatorname{sgn}_\tau A\) and \(b=\operatorname{sgn}_\tau B\) are single-valued and exhaustive, so the nine cells \(\{(a,b)\}\) partition the sample space. The seven cells with \((a,b)\notin\{(+,+),(-,-)\}\) are each labelled by exactly one relation. On the cell \((+,+)\) the single-valued map \(\operatorname{sgn}_\tau G\) yields the disjoint cover \(\{\overlaps,\allowbreak\meets,\allowbreak\before\}\); symmetrically \(\operatorname{sgn}_\tau H\) splits \((-,-)\) into \(\{\overlappedby,\allowbreak\metby,\allowbreak\after\}\). A disjoint cover of each cell of a partition is again a partition, so the thirteen relations partition the space. The limiting claim follows because each band \(|\cdot|\le\tau\) shrinks to a measure-zero hyperplane as \(\tau\to0\).
\end{proof}

A single tolerance is a requirement, not a convenience: the equality bands tile with the strict regions only when the band half-width equals the separation margin. Two independent tolerances reopen gaps or overlaps and break the partition. We verify numerically that the analytic probabilities sum to one within Monte-Carlo precision for every \(\tau\) (Section~\ref{sec:validation}).

\subsection{Taxonomy as Probability Calculus}
\label{sec:taxonomy}

A central claim of this paper is that Allen relations should not be treated as independent flat classes. They are leaves of a relation taxonomy. Let \(\mathcal L\) be the set of Allen leaf predicates. For any coarse predicate \(C\) defined as a union of leaves,
\begin{equation}
    C=\bigcup_{R\in\mathcal L_C} R,
\end{equation}
and for disjoint leaves,
\begin{equation}
    P(C)=\sum_{R\in\mathcal L_C}P(R).
    \label{eq:coarse-sum}
\end{equation}
This is not merely a post-processing convention; it follows from the fact that all relations are regions of the same probability space.

For example, precedence with contact is
\begin{equation}
    P(X\preceq Y)=P(\before)+P(\meets).
\end{equation}
Containment of \(X\) in \(Y\) is
\begin{equation}
    P(X\subseteq Y)
    =P(\starts)+P(\during)+P(\finishes)+P(\equals).
\end{equation}
Reverse containment is
\begin{equation}
    P(X\supseteq Y)
    =P(\startedby)+P(\contains)+P(\finishedby)+P(\equals).
\end{equation}
Partial overlap is
\begin{equation}
    P(\mathrm{partial\ overlap})=P(\overlaps)+P(\overlappedby).
\end{equation}
External adjacency is
\begin{equation}
    P(\mathrm{outer\ contact})=P(\meets)+P(\metby),
\end{equation}
and inner adjacency is
\begin{equation}
\begin{aligned}
    P(\mathrm{inner\ contact})={}&P(\starts)+P(\startedby)+P(\finishes)\\
    &+P(\finishedby)+P(\equals).
\end{aligned}
\end{equation}

This enables top-down refinement. For instance,
\begin{equation}
    P(\meets\given X\preceq Y)
    =\frac{P(\meets)}{P(\before)+P(\meets)},
\end{equation}
and
\begin{equation}
    P(\starts\given X\subseteq Y)
    =\frac{P(\starts)}{P(\starts)+P(\during)+P(\finishes)+P(\equals)}.
\end{equation}
A flat classifier over thirteen labels cannot express these inheritance relations unless the taxonomy is added externally.

\begin{figure}[htbp]
\centering
\begin{forest}
  for tree={font=\small, grow'=0, folder, l sep=6mm, s sep=1.4mm, anchor=west}
  [relation between $X$ and $Y$
    [separated / externally touching
      [$X$ precedes / touches $Y$
        [{\ipairf{0}{1.5}~\before}]
        [{\ipairf{0.5}{2}~\meets}]
      ]
      [$X$ follows / touched by $Y$
        [{\ipairf{4}{5.5}~\metby}]
        [{\ipairf{4.5}{6}~\after}]
      ]
    ]
    [non-separated
      [partial overlap
        [{\ipairf{1}{3}~\overlaps}]
        [{\ipairf{3}{5}~\overlappedby}]
      ]
      [$X \subset Y$ (strict)
        [{\ipairf{2}{3}~\starts}]
        [{\ipairf{2.5}{3.5}~\during}]
        [{\ipairf{3}{4}~\finishes}]
      ]
      [$X \supset Y$ (strict)
        [{\ipairf{2}{5}~\startedby}]
        [{\ipairf{1}{5}~\contains}]
        [{\ipairf{1}{4}~\finishedby}]
      ]
      [{\ipairf{2}{4}~\equals}]
    ]
  ]
\end{forest}
\caption{Relation taxonomy as a strict partition tree. Each leaf shows the relation with the reference interval $Y$ (outlined) held fixed and the event $X$ (filled) placed accordingly---dotted ticks mark $Y$'s edges, so the leaves are directly comparable (compare the crisp boundary orderings of Table~\ref{tab:allen-relations}). The children of every node are mutually exclusive and exhaustive, so each node's probability is the sum of its children's. \equals\ is its own cell rather than being shared by the two containment directions; the non-strict views $X\subseteq Y$ and $X\supseteq Y$ (each of which also includes \equals) are overlapping coarse predicates, not nodes of this tree.}
\Description{A tree whose root is the set of all thirteen Allen relations, split into coarse predicates (precedes, overlaps, contains, follows and their contact variants) down to the thirteen leaves; each leaf shows a small two-bar sketch of X against a fixed Y.}
\label{fig:relation-taxonomy}
\end{figure}

\subsection{Temporal Primitives and the (De)composition of Relations}
\label{sec:primitives}

The thirteen relations are not atomic. Each is a conjunction of elementary assertions about the order of two interval boundaries. With \(X=[a_X,b_X]\) and \(Y=[a_Y,b_Y]\), the within-interval orders \(a_X<b_X\) and \(a_Y<b_Y\) are fixed, so a relation can only constrain the \emph{cross} comparisons of one interval's boundaries against the other's. We call these comparisons \emph{temporal primitives}. They are a general logical foundation for interval relations: the relation algebra is generated by the order type of the boundaries, and every relation is recovered as a conjunction of primitive orderings. The CIDOC~CRM ontology adopts exactly this stance, replacing Allen relations by temporal primitives such as ``starts before the start of'' so that statements stay individually attestable and amenable to reasoning~\cite{doerr2003cidoc}; for instance \(\overlaps\) is recorded as \(a_X\prec a_Y\), \(b_X\succ a_Y\), and \(b_X\prec b_Y\).

Our partition realizes the same foundation in metric, probabilistic form. The four primitives are precisely the thresholded signs of the boundary differences \(A,B,G,H\) of Section~\ref{sec:interval-interval},
\begin{equation}
    \operatorname{sgn}_\tau A,\quad
    \operatorname{sgn}_\tau B,\quad
    \operatorname{sgn}_\tau G,\quad
    \operatorname{sgn}_\tau H,
\end{equation}
comparing \(\mathrm{start}_X/\mathrm{start}_Y\), \(\mathrm{end}_X/\mathrm{end}_Y\), \(\mathrm{end}_X/\mathrm{start}_Y\), and \(\mathrm{end}_Y/\mathrm{start}_X\) respectively. Each is \emph{three-valued}: the strict states \(\pm1\) are CRM's qualitative ``before''/``after'', while the middle state \(0\) (the band \(|\cdot|\le\tau\)) is the metric predicate ``the two boundaries coincide within \(\tau\)'' --- the graded notion of contact that crisp primitives cannot express.

Composition is then read directly off the partition. Table~\ref{tab:primitive-signs} lists the primitive signature \((\operatorname{sgn}_\tau A,\allowbreak\operatorname{sgn}_\tau B,\allowbreak\operatorname{sgn}_\tau G,\allowbreak\operatorname{sgn}_\tau H)\) of every relation. The contact relations are exactly those with a coincident primitive. As a worked example, \meets\ has signature \((+,+,0,-)\),
\begin{equation}
    \meets \iff a_X\prec a_Y \ \wedge\ b_X\prec b_Y \ \wedge\ b_X \approx_\tau a_Y,
\end{equation}
where \(\approx_\tau\) denotes coincidence within \(\tau\). It shares the primitives \(A,B,H\) with \before\ and \overlaps\ and differs from them only in \(\operatorname{sgn}_\tau G\), which runs through \(+,0,-\); thus \meets\ is \before\ with its single discriminating primitive flipped from strict precedence to contact --- not the conjunction of \before\ with a contact predicate, which would be contradictory.

\begin{table}[h]
\centering
\caption{Primitive signature \((\operatorname{sgn}_\tau A,\operatorname{sgn}_\tau B,\operatorname{sgn}_\tau G,\operatorname{sgn}_\tau H)\) of each relation and its number of coincident primitives \(c\). Strict relations have \(c=0\); the six touching/aligning relations \(c=1\); \equals\ has \(c=2\).}
\label{tab:primitive-signs}
\begin{tabular}{lccccc}
\toprule
Relation & \(A\) & \(B\) & \(G\) & \(H\) & \(c\) \\
\midrule
\before        & \(+\) & \(+\) & \(+\) & \(-\) & 0 \\
\meets         & \(+\) & \(+\) & \(0\) & \(-\) & 1 \\
\overlaps      & \(+\) & \(+\) & \(-\) & \(-\) & 0 \\
\starts        & \(0\) & \(+\) & \(-\) & \(-\) & 1 \\
\during        & \(-\) & \(+\) & \(-\) & \(-\) & 0 \\
\finishes      & \(-\) & \(0\) & \(-\) & \(-\) & 1 \\
\equals        & \(0\) & \(0\) & \(-\) & \(-\) & 2 \\
\finishedby    & \(+\) & \(0\) & \(-\) & \(-\) & 1 \\
\contains      & \(+\) & \(-\) & \(-\) & \(-\) & 0 \\
\startedby     & \(0\) & \(-\) & \(-\) & \(-\) & 1 \\
\overlappedby  & \(-\) & \(-\) & \(-\) & \(-\) & 0 \\
\metby         & \(-\) & \(-\) & \(-\) & \(0\) & 1 \\
\after         & \(-\) & \(-\) & \(-\) & \(+\) & 0 \\
\bottomrule
\end{tabular}
\end{table}

Read as inequalities, each signature is the relation's selector \(S_R\)---a matrix with entries in \(\{-1,0,+1\}\) acting on \((A,B,G,H)\transpose\) (Appendix~\ref{app:parameterizations}): a strict sign \(\sigma_v\) contributes the row \(-\sigma_v v\le-\tau\) and a coincidence \(\sigma_v=0\) the pair \(\pm v\le\tau\). For the six strict relations \(S_R=-\operatorname{diag}(\sigma_R)\); the contact relations add a \(\pm\) row per coincident primitive. Each relation's linear inequality system is thus read directly off this table.

The coincident-primitive count characterizes equality and contact exactly.

\begin{proposition}[Equality and contact as coincident primitives]
\label{prop:contact}
Let \(c(R)=\#\{v\in\{A,B,G,H\}:\operatorname{sgn}_\tau v=0\}\) be the number of coincident primitives of a relation \(R\). Then \(c\) equals the codimension of \(R\)'s defining set in boundary-difference space: the six relations \(\{\before,\allowbreak\overlaps,\allowbreak\during,\allowbreak\contains,\allowbreak\overlappedby,\allowbreak\after\}\) have \(c=0\); the six touching/aligning relations \(\{\meets,\allowbreak\starts,\allowbreak\finishes,\allowbreak\finishedby,\allowbreak\startedby,\allowbreak\metby\}\) have \(c=1\); and \(\equals\) has \(c=2\). Consequently \(R\) retains positive probability under every continuous boundary law in the crisp limit \(\tau\to0^+\) if and only if \(c(R)=0\); the seven relations with \(c\ge1\) are precisely the equality/contact relations of Proposition~\ref{prop:partition}, whose mass is carried entirely by bands of width \(O(\tau)\).
\end{proposition}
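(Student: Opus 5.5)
The plan is to reduce everything to the crisp limit \(\tau=0\) and to linear algebra in boundary-difference space. First I will read \(c(R)\) directly off Table~\ref{tab:primitive-signs}: the zero entries give \(c=0\) for the six strict relations, \(c=1\) for \meets, \starts, \finishes, \finishedby, \startedby, \metby, and \(c=2\) for \equals. This part is bookkeeping.

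The substantive step is to identify \(c(R)\) with a codimension. This requires first fixing the ambient space correctly. The four differences are not independent. From the definitions in Section~\ref{sec:interval-interval}, \(G-A=a_X-b_X=-d_X\) and \(H=a_X-b_Y=(G-A)-B\). Hence \((A,B,G,H)\) lives in the three-dimensional subspace \(V=\{H=G-A-B\}\), coordinatized by \((A,B,G)\). The admissible region \(\{d_X\ge0,\ d_Y\ge0\}\) is the full-dimensional cone \(\{G\le A,\ G\le B\}\) in \(V\); here I use \(d_Y=B-G\), which follows from \(G-A=-d_X\) and \(H=G-A-B\).

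At \(\tau=0\) each relation's defining set is the intersection of this cone with the hyperplanes \(\{v=0\}\) for its coincident primitives and open half-spaces for its strict ones. I will show two things for each relation.
\begin{enumerate}[label=(\roman*)]
\item The coincident hyperplanes are linearly independent in \(V\). For \equals, \(A=0\) and \(B=0\) are independent. For the \(c=1\) relations there is nothing to prove, since each of \(A,B,G,H\) is a nonzero functional on \(V\).
\item The strict constraints together with \(d_X,d_Y>0\) cut out a nonempty relatively open subset of that affine subspace. I will verify this by exhibiting a witness point for each relation, for instance the example arrangements of Table~\ref{tab:allen-relations}. All of these have strictly positive durations, so they lie in the interior of the duration cone.
\end{enumerate}
Together, (i) and (ii) show that the defining set of \(R\) is a relatively open piece of a \((3-c)\)-dimensional linear set, so its codimension is \(c(R)\).

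For the probabilistic consequence, I treat the two cases separately.
\begin{enumerate}[label=(\alph*)]
\item \(c(R)\ge1\). The event is contained in a band \(\{|v|\le\tau\}\) for some linear functional \(v\). By continuity of the law of \(v\), its mass tends to \(\Prob(v=0)=0\) as \(\tau\to0^+\). If \(v\) has a bounded density \(f_v\) (true in the Gaussian model, since \(v\) is a nondegenerate affine image of \(\bm U\)), the mass is at most \(2\tau\sup f_v=O(\tau)\). This recovers the statement of Proposition~\ref{prop:partition} and identifies the seven relations with \(c\ge1\) as exactly its equality/contact relations.
\item \(c(R)=0\). The crisp defining set is a nonempty open subset of the interior of the cone. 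The \(\tau\)-regions increase to it as \(\tau\to0^+\), so by monotone convergence their probabilities converge to its probability, which is positive whenever the law charges open sets.
\end{enumerate}

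I expect the main obstacle to be the phrase ``every continuous boundary law.'' Taken literally, the claim fails for the \(c=0\) direction: a continuous law supported entirely inside, say, the \during\ region gives \before\ probability zero. I will therefore make the hypothesis precise as ``continuous with a density positive on the admissible cone.'' The paper's Gaussian model satisfies this, because \(\bm U\) has full support and conditioning on \(D\ge0\) preserves positivity on the interior. The ``only if'' direction, by contrast, needs only continuity and holds for every continuous law.
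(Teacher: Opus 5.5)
Your proof is correct and has the same skeleton as the paper's proof. Both read \(c(R)\) off the signature table, use linear independence of the coincident functionals, conclude that positive codimension means Lebesgue-null, and bound the band mass by its width. Where you differ, you close gaps that the paper's short proof leaves open.

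\begin{itemize}
\item \emph{Ambient space.} The paper never fixes it. You correctly take the three-dimensional \(V=\{H=G-A-B\}\); viewed in \(\mathbb R^4\), every relation would have codimension at least one.
\item \emph{Exact codimension.} Your witness points are what make the codimension exactly \(c\). Linear independence alone gives only a lower bound, since the strict inequalities and the duration cone could a priori thin or empty the set.
\item \emph{The converse direction.} The paper proves only \(c(R)\ge1\Rightarrow\) vanishing limit mass and then asserts ``the equivalence.'' The direction \(c(R)=0\Rightarrow\) positive mass is never argued. As you observe, it is false for literally every continuous law: a law supported in the \during\ region gives \before\ zero mass. Your monotone-convergence argument under a density positive on the admissible cone is the right repair. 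The truncated Gaussian model meets this condition when all standard deviations are positive.
\item \emph{The \(O(\tau)\) rate.} The paper's ``width \(2\tau\)'' remark tacitly needs a bounded density for the coincident difference. You supply it: truncation only rescales the Gaussian density by \(1/Q\).
\end{itemize}

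Two small precisions. First, ``continuous'' must mean absolutely continuous even for your ``only if'' direction, so ``needs only continuity'' should read ``needs only a density.'' For example, the atomless law \(a_X=U\), \(b_X=a_Y=U+1\), \(b_Y=U+2\) with \(U\) uniform makes \meets\ hold with probability one for every \(\tau<1\). Second, the crisp \(c=0\) sets are not contained in the interior of the cone: the crisp \before\ set \(\{G>0\}\) contains points with \(d_X=0\). They do contain a nonempty open subset of the interior, which is all your positivity argument uses.
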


\begin{proof}
From Table~\ref{tab:prob-allen-ineq}, the coincidence state \(\operatorname{sgn}_\tau{=}0\) occurs for \(A\) on \(\{\starts,\allowbreak\equals,\allowbreak\startedby\}\), for \(B\) on \(\{\finishes,\allowbreak\equals,\allowbreak\finishedby\}\), for \(G\) on \(\{\meets\}\), and for \(H\) on \(\{\metby\}\). Collecting by relation gives \(c=2\) for \equals, \(c=1\) for the other six listed, and \(c=0\) for the remaining six, which is the stated trichotomy. Each coincident primitive constrains its difference to the band \(|v|\le\tau\), whose \(\tau\to0\) limit is the hyperplane \(\{v=0\}\) of codimension one; the coincident primitives of each relation are linearly independent (for \equals, \(A\) and \(B\) span two dimensions; the others impose a single constraint), so \(c\) of them cut out a set of codimension \(c\). A set of positive codimension is Lebesgue-null and carries zero probability under any continuous law, giving the equivalence; for \(\tau>0\) each band has width \(2\tau\) in its difference coordinate, so the contact mass is \(O(\tau)\).
\end{proof}

This factorization holds for the whole algebra at once.

\begin{proposition}[Primitive factorization]
\label{prop:factorization}
Let \(\Phi(\bm U)=(\operatorname{sgn}_\tau A,\operatorname{sgn}_\tau B,\operatorname{sgn}_\tau G,\operatorname{sgn}_\tau H)\) with \(\bm U=(Z,D_X,D_Y)\transpose\). On the feasible cone \(\{D_X,D_Y\ge0\}\) each relation is a primitive fiber, \(\{R\}=\Phi^{-1}(\sigma_R)\), where \(\sigma_R\) is the signature of Table~\ref{tab:primitive-signs}. Hence every relation probability is the joint law of the four primitives at a single atom,
\begin{equation}
\label{eq:primitive-joint}
    P(R)=P\!\Big(\textstyle\bigcap_{v\in\{A,B,G,H\}}\{\operatorname{sgn}_\tau v=\sigma_R(v)\}\,\Big|\,D_X,D_Y\ge0\Big),
\end{equation}
a single multivariate-Gaussian band--orthant integral; and the marginal of each primitive is the coarse predicate \(P(\operatorname{sgn}_\tau v=s)=\sum_{R:\,\sigma_R(v)=s}P(R)\), a one-dimensional Gaussian tail or band. Since the four differences span only three dimensions (\(H=G-A-B\)), the joint of \eqref{eq:primitive-joint} is not the product of these marginals.
\end{proposition}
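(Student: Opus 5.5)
The plan is to prove the fiber identity \(\{R\}=\Phi^{-1}(\sigma_R)\) on the feasible cone first; the probabilistic statements then follow. Fix \(\bm U\) with \(D_X,D_Y\ge0\), so that \(d_X=D_X\), \(d_Y=D_Y\) and \(A,B,G,H\) are the affine functions of Section~\ref{sec:interval-interval}. By Proposition~\ref{prop:partition} the thirteen relations of Table~\ref{tab:prob-allen-ineq} partition this cone. It therefore suffices to show, for each \(R\), that \(R\) holds iff \(\Phi(\bm U)=\sigma_R\).

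The inclusion \(\Phi^{-1}(\sigma_R)\subseteq R\) is immediate. Each row of Table~\ref{tab:prob-allen-ineq} is a sub-list of the sign conditions in the corresponding row of Table~\ref{tab:primitive-signs}.

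For \(R\subseteq\Phi^{-1}(\sigma_R)\), I must show that the inequalities in Table~\ref{tab:prob-allen-ineq} force the remaining, unlisted signs on the feasible cone. The tool is the identities
\[
A-G=\tfrac{d_X+d_Y}{2}+\tfrac{d_X-d_Y}{2}=d_X\ge0,\qquad B-G=d_Y\ge0,
\]
together with \(G+H=-(d_X+d_Y)\le0\) and \(H=G-A-B\) (whose sign I will check against the definitions before relying on it). The cases then go as follows.
\begin{itemize}
\item \textbf{Before.} \(G>\tau\) gives \(A\ge G>\tau\) and \(B>\tau\). It also gives \(H\le -G<-\tau\).
\item \textbf{Meets and overlaps.} \(G\le\tau\) gives \(H\le -G\). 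So \(H<-\tau\) whenever \(G>-\ldots\); more precisely, \(A,B>\tau\) give \(H=G-A-B<G-2\tau\le -\tau\).
\item \textbf{Containment-type cells} (starts, during, finishes, equals, finished-by, contains, started-by). Here \(A\) and \(B\) are bounded on opposite sides, or one of them lies in a band. I would derive \(G<-\tau\) from \(G\le\min(A,B)-\ldots\). This needs care: the bounds \(G\le A\) and \(G\le B\) only give \(G\le\min(A,B)\), and \(\min(A,B)\) may be as large as \(\tau\), so the strict \(-\tau\) bound is not automatic.
\end{itemize}

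\textbf{The containment cells are the main obstacle.} Take, for example, \starts\ with \(|A|\le\tau\), \(B>\tau\). We need \(G<-\tau\), i.e.\ \(A-d_X<-\tau\), i.e.\ \(d_X>A+\tau\), and this can fail for short intervals with \(d_X\le2\tau\). I would first try to close such gaps using the symmetric identity \(G+H=-(d_X+d_Y)\), and otherwise by appealing to the partition. If \(G\) were \(\ge-\tau\) in the starts cell, then the configuration would violate the cell structure, because in Proposition~\ref{prop:partition} the cell \((0,+)\) is labelled solely by \starts. That appeal only shows the cell is labelled starts, though, not that its signature matches Table~\ref{tab:primitive-signs}.

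If the gap is genuine, I would restrict to the regime where the durations exceed \(2\tau\). Alternatively, I would define \(\sigma_R\) only on the coordinates that Table~\ref{tab:prob-allen-ineq} actually constrains and treat the rest as implied in the crisp limit. I would also verify the tabulated signatures at \(\tau=0\), where \(d_X,d_Y>0\) almost surely makes every implication strict.

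Once the fiber identity holds, \eqref{eq:primitive-joint} is just the probability of \(\Phi^{-1}(\sigma_R)\) conditioned on the cone. It is a single band–orthant Gaussian integral because each sign condition contributes one or two affine rows in \(\bm U\), as in \eqref{eq:orthant}.

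The marginal statement follows by summing over the partition. The fibers \(\Phi^{-1}(\sigma_R)\) are disjoint and cover the cone, so \(\{\operatorname{sgn}_\tau v=s\}\) is the disjoint union of those fibers with \(\sigma_R(v)=s\). Moreover, \(v\) is a univariate affine Gaussian in \(\bm U\), so before truncation this is a tail or band of a one-dimensional normal.

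For non-factorization, the linear dependence \(H=G-A-B\) suffices. It makes the joint law degenerate (supported on a 3-dimensional subspace), whereas a product of the four nondegenerate marginals is absolutely continuous on \(\mathbb R^4\). Concretely, the atom \((+,+,+,+)\) has joint probability zero, since \(G>\tau\) forces \(H<-\tau\). Its product of marginals, however, is positive whenever all four marginals are.
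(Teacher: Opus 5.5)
Your obstacle is genuine, and it defeats the statement itself rather than your method. For $\tau>0$ the fiber identity $\{R\}=\Phi^{-1}(\sigma_R)$ is false on the five inner-contact relations. So no completion of your plan can prove the proposition as written, and, as you suspected, neither can an appeal to Proposition~\ref{prop:partition}.

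Your identities give $G=A-d_X=B-d_Y\le\min(A,B)$ and $H=-A-d_Y=-B-d_X\le-\max(A,B)$. These do close the rows of \before, \meets, \overlaps, \during, \contains, \overlappedby, \metby, and \after\ for every $\tau$. In the \starts\ cell, however, they only give $G\le A\le\tau$.

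Take $Z=4.75\tau$, $D_X=0.5\tau$, $D_Y=10\tau$. Then $(A,B,G,H)=(0,\,9.5\tau,\,-0.5\tau,\,-10\tau)$, with every inequality strict. So an open set lies in \starts\ but carries the signature $(0,+,0,-)\neq\sigma_{\starts}$, and it has positive Gaussian mass whenever the latent variances are positive.

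The same thing happens elsewhere. For a short $X$ ($d_X\le2\tau$), \finishes\ admits $(-,0,-,0)$, and \equals\ admits $G$ or $H$ (or both) in band. For a short $Y$ ($d_Y\le2\tau$), \finishedby\ and \startedby\ admit $(+,0,0,-)$ and $(0,-,-,0)$. Altogether $\Phi$ has twenty feasible values, not thirteen. The paper's own proof (Step~1 of Appendix~\ref{app:factorization}) works out only the \before\ row and then asserts that ``the same entailment closes every row.'' That is exactly the step you could not carry out.

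This has three consequences.
\begin{enumerate}
\item Equation~\eqref{eq:primitive-joint} undercounts the five inner-contact probabilities.
\item The marginal identity holds for $A$ and $B$, whose signs are forced in every row, but fails for $G$ and $H$. For instance, $P(\operatorname{sgn}_\tau G=0)=P(\meets)+P\big((\starts\cup\equals\cup\finishedby)\cap\{|G|\le\tau\}\big)$. So the paper's Step~3 identity $P(\operatorname{sgn}_\tau G=0)=P(\meets)$ is also only approximate.
\item Your proposed repairs are the right ones:
\begin{itemize}
\item On $\{D_X>2\tau,\ D_Y>2\tau\}$ every inner-contact row closes (e.g.\ $G=A-d_X<\tau-2\tau$ in \starts), so the fiber identity holds verbatim there.
\item At $\tau=0$ it fails only on the null set $\{D_XD_Y=0\}$, so all the probabilistic clauses hold.
\item For general $\tau>0$, the correct statement intersects only over the primitives actually constrained in Table~\ref{tab:prob-allen-ineq}. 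That is still a single band--orthant integral, with each relation a union of up to four $\Phi$-fibers. It is also what Equation~\eqref{eq:orthant} evaluates, so the computed probabilities are unaffected.
\end{itemize}
\end{enumerate}

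Your non-factorization argument via the atom $(+,+,+,+)$ is correct: its joint mass is zero while the product of marginals is positive. That argument is what proves the clause. The degenerate-support remark concerns the continuous vector $(A,B,G,H)$, not the discrete sign law, and can be dropped.
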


Finally, the decomposition has two probabilistic levels. Marginally, each primitive is a proper three-way distribution---its states partition the corresponding axis---and its probabilities are sums of leaf probabilities, i.e. coarse predicates of the taxonomy (Section~\ref{sec:taxonomy}). The start/start primitive, for example, induces the coarsening
\begin{equation}
\begin{aligned}
    P(\operatorname{sgn}_\tau A={+}) &= P(\before)+P(\meets)+P(\overlaps)+P(\finishedby)+P(\contains),\\
    P(\operatorname{sgn}_\tau A={0}) &= P(\starts)+P(\equals)+P(\startedby),\\
    P(\operatorname{sgn}_\tau A={-}) &= P(\during)+P(\finishes)+P(\overlappedby)+P(\metby)+P(\after),
\end{aligned}
\end{equation}
whose middle cluster is precisely ``shared start''; the four primitives are thus four three-way coarsenings of the leaf set, dual to the taxonomy's nodes. A relation, by contrast, is the \emph{joint} law of its primitives. Because all four differences are linear in the shared latent \(\bm U=(Z,D_X,D_Y)\) (indeed \(H=G-A-B\)), the primitives are strongly correlated and a relation is never the product of its primitive marginals: the gap primitive alone entails the leading-edge orders, since \(G>\tau\Rightarrow A,B>\tau\) and hence \(P(\before)=P(\operatorname{sgn}_\tau G={+})\), whereas at a shared midpoint \(A\approx-B\) precludes both from being large. The derivation, with the one-dimensional closed forms for the primitive marginals, is given in Appendix~\ref{app:factorization}. This is what the probabilistic lift adds to the CRM foundation: each primitive remains an independently attestable, now graded, statement, while a single covariance over \(\bm U\) supplies the dependence needed to compose them into coherent relation probabilities. On the running example this localization is striking: the entire thirteen-way distribution reduces to uncertainty in a single primitive (Section~\ref{sec:case-study}).

\subsection{Consistency of Limiting Cases}

The taxonomy makes boundary limits semantically stable. Consider the gap
\begin{equation}
    g=a_Y-b_X.
\end{equation}
In crisp Allen algebra, \(g>0\) gives \before\ and \(g=0\) gives \meets. A flat model treats this as a class switch. In the taxonomy, both are children of the coarser predicate \(X\preceq Y\), and by the partition of Section~\ref{sec:partition} this parent probability is their sum,
\begin{equation}
    P(X\preceq Y)=P(\before)+P(\meets),
\end{equation}
which is stable even though the leaf mass migrates between \before\ (the region \(g>\tau\)) and \meets\ (the band \(|g|\le\tau\)) as the tolerance or the expected gap varies.
As the expected gap approaches zero, probability mass moves from strict \before\ to approximate \meets, while the parent probability remains well-defined. Analogously, \during\ can approach \starts\ or \finishes\ as one boundary aligns with the container, while containment remains stable:
\begin{equation}
    P(X\subseteq Y)=P(\starts)+P(\during)+P(\finishes)+P(\equals).
\end{equation}
This is a key robustness advantage over flat multiclass relation assignment.

\section{Validation}
\label{sec:validation}

The accompanying open-source implementation computes the multivariate Gaussian orthant probabilities and, independently, a Monte-Carlo estimate that samples boundaries and classifies each draw by Table~\ref{tab:prob-allen-ineq}. All claims below are reproduced by that package.

\paragraph{Partition.} The thirteen probabilities sum to one for every tolerance \(\tau\), confirming Proposition~\ref{prop:partition}; the Monte-Carlo classifier assigns every sample to exactly one relation. By contrast, a naive one-sided contact band (assigning \meets\ to \(0\le G\le\tau\) rather than \(|G|\le\tau\)) loses several percent of the probability mass into uncovered buffer zones.

\paragraph{Agreement.} The analytic probabilities match the Monte-Carlo frequencies to within sampling noise across all thirteen relations: the maximum absolute deviation is about \(2\times10^{-4}\) at \(4\times10^{6}\) samples.

\paragraph{Graded transitions and limits.} As one interval slides past the other, probability mass transfers smoothly between relations (Figure~\ref{fig:sweep}). As a duration shrinks to zero, the thirteen relations reduce continuously to the five point--interval relations \(\{\before,\allowbreak\starts,\allowbreak\during,\allowbreak\finishes,\allowbreak\after\}\) (Figure~\ref{fig:limit}), and to \(\{\before,\allowbreak\equals,\allowbreak\after\}\) for two points, with the partition preserved at every step.

\paragraph{Scale invariance.} Scaling all means, durations, standard deviations, and the tolerance by a common factor leaves every relation probability unchanged, as the standardized arguments are invariant.

\begin{figure}[t]
\centering
\includegraphics[width=0.82\linewidth]{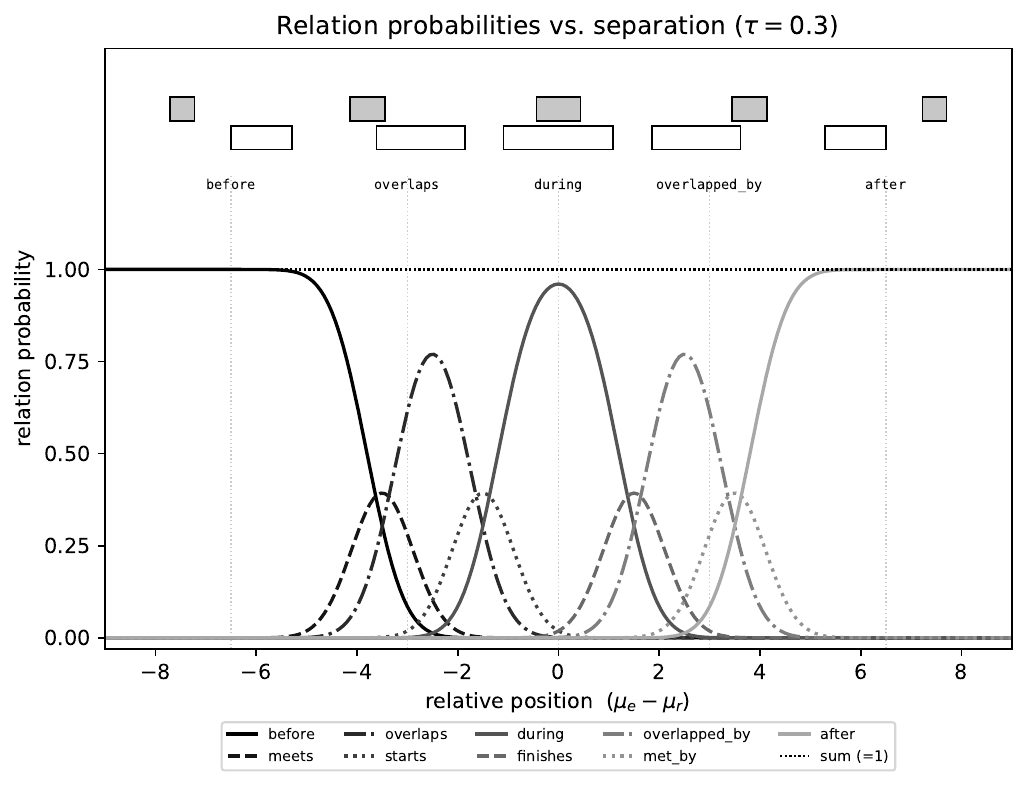}
\caption{Relation probabilities as one interval slides across the other. Mass transfers smoothly through the relations while the curves sum to one everywhere.}
\Description{Line plot of the thirteen relation probabilities against the position of one interval sliding across the other; the curves rise and fall in the Allen order and sum to one at every position.}
\label{fig:sweep}
\end{figure}

\begin{figure}[t]
\centering
\includegraphics[width=0.72\linewidth]{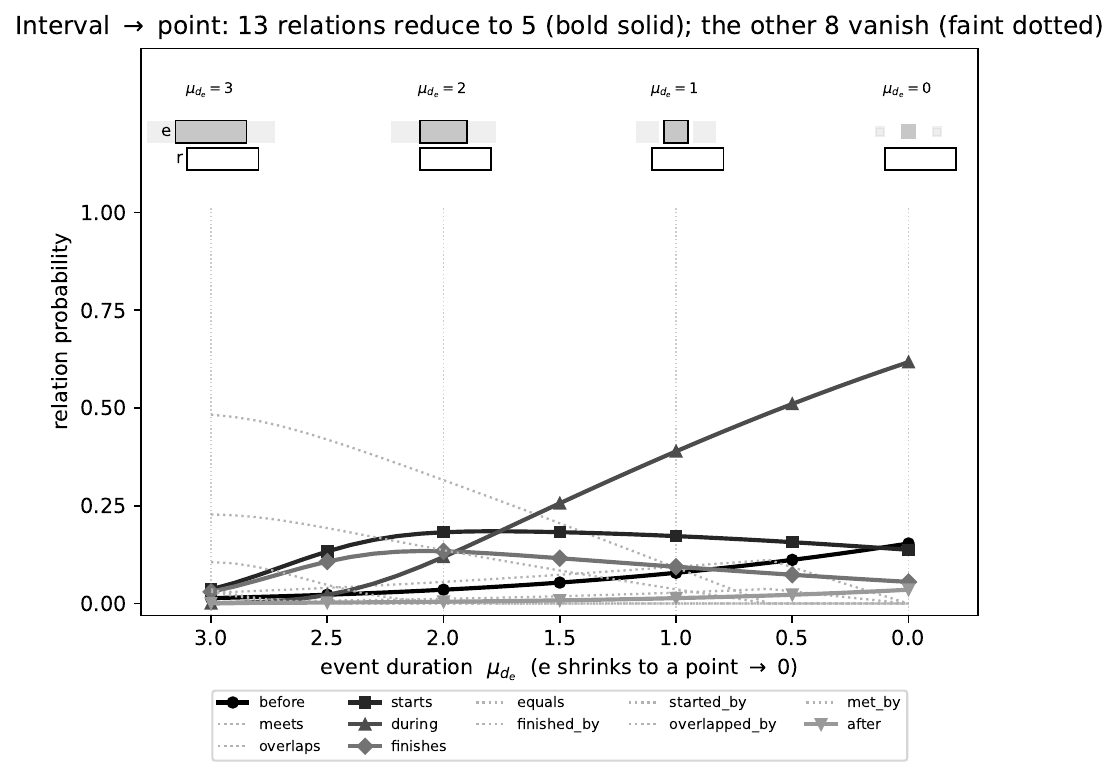}
\caption{Interval-to-point limit: as the event duration \(\mu_{d_e}\to0\) (right edge), the thirteen interval--interval relations reduce continuously to the five point--interval relations \(\{\before,\allowbreak\starts,\allowbreak\during,\allowbreak\finishes,\allowbreak\after\}\) (bold solid); the others lose their mass and vanish (faint dotted). The top panels show the constellation at four durations: the event \(e\) (filled, with faint copies for its positional uncertainty) shrinks against the fixed reference \(r\) (outlined) down to a point, offset from \(r\)'s midpoint so the \starts/\finishes\ and \before/\after\ curves separate. The thirteen probabilities sum to one throughout.}
\Description{Stacked-area plot of the thirteen relation probabilities as the event duration shrinks to zero; eight relations vanish and the mass concentrates on the five point-interval relations.}
\label{fig:limit}
\end{figure}

\subsection{A Worked Example}
\label{sec:case-study}

We collect the threads on the running storm/outage example of Section~\ref{sec:representation}. Figure~\ref{fig:case-study} shows the two uncertain intervals and the resulting relation probabilities at \(\tau=0.4\,\mathrm{h}\).

\paragraph{Graded relations.} A crisp classifier sees only the mode, \overlaps\ (\(P\approx0.52\)): the storm starts first and the outage runs past its end. But the distribution is broad---\finishedby\ (\(0.27\)), \contains\ (\(0.10\)), \startedby\ and \equals\ (a few percent each)---so the single label discards most of the picture.

\paragraph{Taxonomy.} The coarse predicates make the real question legible: the outage spilling past the storm and the outage being contained within it are almost equally likely, \(P(\text{partial overlap})\approx0.53\) against \(P(Y\subseteq X)\approx0.44\), while precedence is negligible (\(\approx0.01\)). Neither coarse probability is a separate bar in Figure~\ref{fig:case-study}; each is the sum of its leaves there, following the taxonomy rule that a parent's probability is the total of its children's. Containment is the sum of its four leaves,
\begin{align*}
    P(Y\subseteq X)&=P(\finishedby)+P(\contains)+P(\startedby)+P(\equals)\\
    &\approx0.27+0.10+0.04+0.03\approx0.44,
\end{align*}
and partial overlap is \(P(\overlaps)+P(\overlappedby)\). The graded model reports this near-even split; a flat \overlaps\ label cannot.

\paragraph{Primitive localization.} The relational uncertainty collapses onto a single temporal primitive. The storm almost surely starts first, \(P(\operatorname{sgn}_\tau A={+})\approx0.90\) (``the outage began after the storm started''), and the two certainly overlap: \(\operatorname{sgn}_\tau G\) and \(\operatorname{sgn}_\tau H\) are \emph{determinate}---each concentrated on a single sign with probability \(\approx1\)---since the outage almost surely begins before the storm ends (\(\operatorname{sgn}_\tau G={-}\)) and the storm before the outage ends (\(\operatorname{sgn}_\tau H={-}\)), so neither a leading nor a trailing gap can open. What is undecided is whether the storm ends before, together with, or after the outage---primitive \(B\)---with probabilities \(\approx0.55,\,0.30,\,0.15\). The thirteen-way uncertainty is, at heart, one attestable boundary comparison.

\paragraph{Limits and scale.} If the outage is reported as an instant rather than an interval (duration \(\to0\)), the relations reduce to the point--interval set, now dominated by \contains\ (\(\approx0.77\))---the blackout instant falling within the storm. Expressing the same scenario in minutes rather than hours (scaling means, durations, spreads, and \(\tau\) by \(60\)) leaves every probability unchanged to \(2\times10^{-5}\).

\paragraph{Language.} The report's ``during the storm'' is, quantitatively, a roughly even mixture of containment and trailing overlap; a sharper adverbial such as ``well into the storm'' would concentrate \(\operatorname{sgn}_\tau A={+}\) and shift mass from \starts\ and \equals\ toward \during\ and \contains. The algebra turns vague temporal language into a distribution over boundary orderings.

\begin{figure}[t]
\centering
\includegraphics[width=0.86\linewidth]{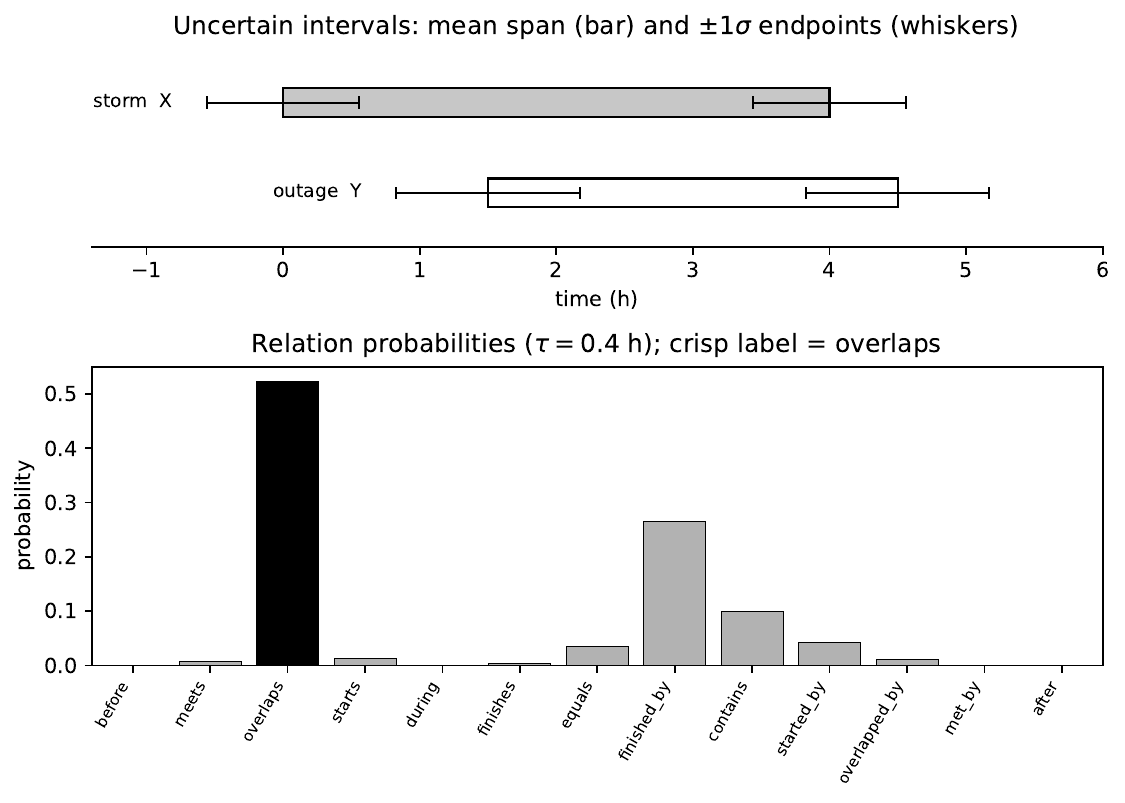}
\caption{Running example. Top: the uncertain storm \(X\) and power outage \(Y\) (mean span as a bar, \(\pm1\sigma\) boundary uncertainty as whiskers). Bottom: the thirteen relation probabilities at \(\tau=0.4\,\mathrm{h}\); the crisp mode \overlaps\ (black) accounts for only about half the mass.}
\Description{Top: two horizontal bars for the storm and the power outage with whiskers marking one standard deviation of boundary uncertainty. Bottom: bar chart of the thirteen relation probabilities, dominated by overlaps and finished-by.}
\label{fig:case-study}
\end{figure}

\section{Discussion, Conclusions, and Applications}

\subsection{Comparison to Allen's Hourglass}

Allen's Hourglass and the present model share a motivation: crisp Allen relations are brittle under uncertain interval boundaries \cite{petridis2010hourglass}. The hourglass representation organizes relations by relative position and relative size, thereby producing a robust graded treatment of relation assignment. It also makes limiting cases such as durationless intervals geometrically visible.

The present approach differs in three respects. First, it gives a generative semantics: relation probabilities are induced by probability distributions over time points, midpoints, and durations. Second, equality and contact are handled explicitly through confidence borders, avoiding the zero-measure problem of exact boundary equalities in continuous spaces. Third, it preserves the relation taxonomy as probability calculus rather than treating the thirteen Allen labels as independent classes.

Thus, Allen's Hourglass can be seen as a conceptual geometry of robust relation assignment, while the present model provides a distributional semantics and analytic computation of relation probabilities. The two approaches are complementary: the hourglass is valuable for visualization and intuition; the Gaussian inequality formulation is valuable for inference, limiting cases, and integration with metric uncertainty.

\subsection{The Relation Phase Diagram, Taxonomy, and Dimensional Reduction}
\label{sec:phase-diagram}

The hourglass is usefully read as a \emph{phase diagram} over relation classes: a stratification of configuration space by the codimension of the defining equalities. The six relations defined by strict inequalities are full-dimensional ``bulk phases''; the six contact relations are codimension-one ``phase boundaries''; and \equals, defined by two equalities, is a codimension-two multi-phase point. Our contribution is to place a \emph{generative measure} on this geometry: a relation probability is the mass that the boundary density assigns to a phase, the tolerance \(\tau\) is the finite width of the coexistence bands, and (as future work) the gradients of relation probabilities are response functions peaked at the boundaries. Computing the most probable relation over this plane reproduces the hourglass geometry directly from the generative model (Figure~\ref{fig:phase}, left), beside the dimensional reduction discussed below (right).

\begin{figure}[t]
\centering
\includegraphics[width=\linewidth]{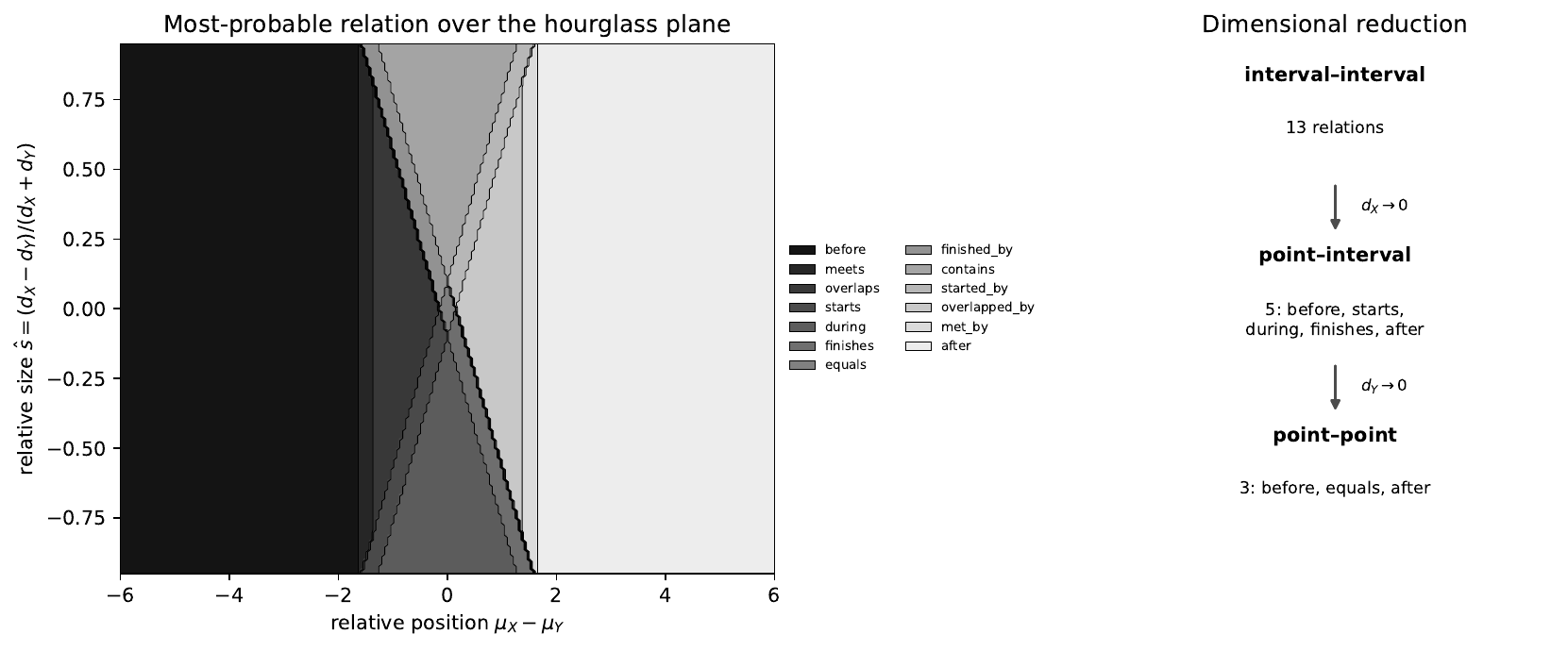}
\caption{Left: the most probable relation over the hourglass plane of relative position and relative size, reproducing the hourglass geometry from the generative model. Right: the \(13\to5\to3\) reduction as intervals collapse to points.}
\Description{Left: a two-dimensional map of relative position versus relative size, coloured by the most probable relation, reproducing the hourglass shape. Right: three panels showing the collapse from thirteen to five to three relations as intervals shrink to points.}
\label{fig:phase}
\end{figure}

The correspondence is exact in our coordinates. With \(Z=t_Y-t_X\), the start and finish comparisons satisfy
\begin{equation}
    A+B=2Z,\qquad A-B=d_X-d_Y,
\end{equation}
so the \((A,B)\) plane is the hourglass plane of relative position and relative size, rotated by \(45^\circ\). The gaps carry the duration \emph{sum}, \(G=Z-\tfrac12(d_X+d_Y)\) and \(H=-Z-\tfrac12(d_X+d_Y)\), which the two hourglass axes suppress; this third coordinate reappears as the hourglass's characteristic waist, and it is precisely what carries the contact relations \(\meets,\allowbreak\metby\) that the hourglass cannot represent. Hourglass position is thus our relative mean occurrence time and hourglass size our duration contrast, but as relative, normalized combinations rather than the per-object means.

The taxonomy is the invariant of the reduction to lower-dimensional objects. As a duration shrinks to zero the latent space loses a dimension and leaves collapse, but the coarse predicates survive:
\begin{center}
\begin{tabular}{lll}
\toprule
object pair & surviving relations & count \\
\midrule
interval--interval & all & 13 \\
point--interval & \before, \starts, \during, \finishes, \after & 5 \\
point--point & \before, \equals, \after & 3 \\
\bottomrule
\end{tabular}
\end{center}
The collapse is continuous and the partition is preserved in every limit (Figure~\ref{fig:limit}). Equivalently, the taxonomy is a hierarchical coarse-graining of the relation phase diagram, and the interval-to-point reductions conserve probability \emph{within} each taxonomic parent while redistributing it \emph{among} its leaves. This is exact.

\begin{proposition}[Taxonomy invariance under reduction]
\label{prop:reduction}
Let \(X,Y\) be uncertain intervals under the single-tolerance partition of Table~\ref{tab:prob-allen-ineq}, and let \(d_X\to0\) in distribution, so \(X\) degenerates to the Gaussian point \(t_X\). Then
\begin{enumerate}[leftmargin=*,label=(\roman*)]
    \item the thirteen-relation partition converges to the five-relation point--interval partition \(\{\before,\allowbreak\starts,\allowbreak\during,\allowbreak\finishes,\allowbreak\after\}\), and the remaining eight relations carry probability \(\to0\);
    \item every coarse predicate \(C\) that is a node of the taxonomy tree (Figure~\ref{fig:relation-taxonomy}) has \(P(C)\) continuous at the limit; and
    \item the limit reallocates probability only \emph{among the leaves of a common parent}---no mass crosses between distinct coarse nodes.
\end{enumerate}
A second collapse \(d_Y\to0\) reduces the five to \(\{\before,\allowbreak\equals,\allowbreak\after\}\) by the same statement. The taxonomy tree, not the leaf catalogue, is the invariant of the reduction.
\end{proposition}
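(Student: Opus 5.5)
The plan is to work in the latent coordinates $\bm U=(Z,d_X,d_Y)$ and treat the collapse $d_X\to0$ as a statement about the law of $\bm U$ converging weakly to the law of $(Z,0,d_Y)$. Then I would study the thirteen cells of Table~\ref{tab:prob-allen-ineq} on the limiting slice $\{d_X=0\}$. Setting $d_X=0$ in the definitions of $A,B,G,H$ gives the identities
\[
G=A,\qquad H=-B,\qquad A-B=-d_Y\le 0 .
\]
Substituting these into each row of Table~\ref{tab:prob-allen-ineq} is a purely combinatorial check that settles which cells can be occupied in the limit.

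\emph{Meets} requires $A>\tau$ and $|G|\le\tau$, which is impossible since $G=A$. \emph{Overlaps} requires $A>\tau$ and $G<-\tau$, also impossible. \emph{Finished\_by}, \emph{contains} and \emph{started\_by} each require $A>B$ up to $\tau$-bands in a way that contradicts $A\le B$. \emph{Overlapped\_by} and \emph{met\_by} require $H\le\tau$ while $B<-\tau$, which contradicts $H=-B$. This leaves before, starts, during, finishes, after, and also equals.

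Equals is the subtle case. On the slice it becomes $\{|A|\le\tau,\ |A+d_Y|\le\tau\}$, which has positive mass whenever $d_Y\le2\tau$ with positive probability. I would therefore make the hypothesis implicit in (i) explicit: either $\tau\to0^+$ jointly with the collapse, or $\Prob(d_Y\le2\tau)\to0$ (the reference interval is resolvable at tolerance $\tau$). Under that hypothesis the equals mass is at most $\Prob(d_Y\le 2\tau)$, so it vanishes and the five-relation set remains. The second collapse $d_Y\to0$ is handled the same way. The extra identity $A=B$ forces $G=A=-H$ and kills starts and finishes (and during); the crisp cells $A>\tau$, $|A|\le\tau$, $A<-\tau$ then become before, equals and after, matching the point--point relations of Section~\ref{sec:confidence-borders}.

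The continuity claims in (i) and (ii) follow from the portmanteau theorem. Each leaf is a finite intersection of half-spaces in $\bm U$, so its topological boundary lies in finitely many affine hyperplanes of the form $\{\ell(\bm U)=\pm\tau\}$. Each such $\ell$ involves $Z$ with coefficient $\pm1$, because every one of $A,B,G,H$ contains $\pm Z$. Since $Z$ is a nondegenerate Gaussian independent of the durations, these hyperplanes are null under the limit law. Weak convergence therefore gives $P(R)\to P_{\lim}(R)$ for every leaf. By Proposition~\ref{prop:partition} and Equation~\eqref{eq:coarse-sum}, every node $C$ of Figure~\ref{fig:relation-taxonomy} is a finite disjoint union of leaves, so $P(C)$ is also continuous.

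For (iii), I would show that every leaf that dies has its mass absorbed by a surviving sibling under the same parent. The argument is quantitative. Before the limit, with small $d_X$, the identities hold up to $O(d_X)$, and the dying cells are confined to $O(d_X)$-neighbourhoods of the surviving cells within the same node:
\begin{itemize}[leftmargin=1.4em]
\item meets sits between before and overlaps via $|G|\le\tau$ with $A>\tau$, all inside ``precedes/touches'';
\item finished\_by and contains shrink toward equals inside the containment region.
\end{itemize}

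This is the step I expect to be the main obstacle. Parents such as partial overlap lose both children, so ``no mass crosses nodes'' must be read as ``each node's probability converges to the sum of its surviving leaves.'' That reading is exactly the leafwise continuity already obtained, combined with Equation~\eqref{eq:coarse-sum}. I would state it in that form rather than as a pathwise claim, and verify it node by node against Figure~\ref{fig:relation-taxonomy}.
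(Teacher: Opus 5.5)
Your argument for (i) and (ii) follows the paper's proof. You use the same slice identities $G=A$, $H=-B$, $B-A=d_Y\ge0$ and the same row-by-row contradictions. Your portmanteau step does the work of the paper's dominated-convergence step, and it states explicitly the null-boundary condition that step leaves tacit: the boundary hyperplanes are null because every row of $M$ carries $\pm Z$ and $\sigma_Z>0$.

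Your caveat on \equals\ is a genuine correction, not a side remark. The paper's proof counts \equals\ among the eight contradictory cells. Yet its own Appendix~\ref{app:limits} concedes that \equals\ is feasible on $\{d_X=0\}$ whenever $d_Y\le2\tau$. For a truncated-Gaussian $d_Y$ with $\sigma_{d_Y}>0$ and $\tau>0$ that event has positive probability, so (i) fails as stated. A few refinements:
\begin{itemize}
\item Write your hypothesis as $\Prob(d_Y\le2\tau)=0$, since the law of $d_Y$ does not move in the first collapse. For $\tau>0$ it holds in the paper's model only for deterministic $d_Y>2\tau$.
\item The alternative $\tau\to0^+$ also empties \starts\ and \finishes.
\item In the second collapse the hypothesis must be abandoned, since $\Prob(d_Y\le2\tau)\to1$ and \equals\ is exactly the new survivor. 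Your slice computation there is correct and the portmanteau step carries over, but ``handled the same way'' needs that qualification.
\end{itemize}

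For (iii), drop the two bullets: the absorption claim is false, not merely hard to prove. \meets\ lies between \before\ and \overlaps, which belong to different top-level nodes. Likewise \contains, \finishedby\ and \startedby\ drain into \equals\ or into \starts, \during, \finishes, none of which is their sibling.

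Concretely, fix $d_Y=10$ and $\tau<1$. At $d_X=4$ the node ``$X$ precedes/touches $Y$'' is exactly $\{G\ge-\tau\}=\{Z\ge7-\tau\}$. In the limit it is $\{Z>5+\tau\}$, so it gains $\Prob(5+\tau<Z<7-\tau)>0$, all of it taken from \overlaps. For example, $Y=[0,10]$, $X=[-3,1]$ is \overlaps\ and becomes \before\ as $X$ shrinks about its midpoint. Likewise, in the second collapse \starts\ and \during\ feed \equals, which is not a child of the strict-containment node.

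The paper's own justification (``the union defining the parent is unchanged'') confuses a fixed union of labels with a fixed region in $\bm U$. The only provable content of (iii) is your weak reading: each node's probability converges to the sum of its limiting leaves. That is just (ii) again. With that reading and the \equals\ hypothesis made explicit, your proof is complete.
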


\begin{proof}
At \(d_X=0\) the point boundaries coincide, \(a_X=b_X=t_X\), so two of the four boundary differences of Section~\ref{sec:interval-interval} become dependent: \(A=G\) and \(H=-B\), with \(B-A=d_Y\ge0\). Each of the eight relations outside \(\{\before,\allowbreak\starts,\allowbreak\during,\allowbreak\finishes,\allowbreak\after\}\) combines these identities into a contradiction under the thresholded signs---\meets\ needs \(A>\tau\) with \(|G|\le\tau\) and \(G=A\); \contains\ needs \(A>\tau,\,B<-\tau\) against \(A\le B\)---so its region is empty on \(\{d_X=0\}\) and, for \(d_X>0\), a sliver of vanishing measure; hence its probability \(\to0\), giving~(i) (the full sign bookkeeping is Appendix~\ref{app:limits}). Each surviving region converges to its point--interval cell in the coordinates \((A,B)=(a_Y-t_X,\,b_Y-t_X)\). For~(ii), a taxonomy node \(C=\bigcup_{R\in\mathcal L_C}R\) has indicator \(\mathbf 1_C\) that converges pointwise almost everywhere and is bounded by \(1\), so \(P(C)\to P(C\mid d_X{=}0)\) by dominated convergence. For~(iii), the leaf boundaries interior to a parent are the bands \(|A|\le\tau\) and \(|B|\le\tau\); as \(d_X\to0\) they shift and a vanishing leaf's mass passes to its siblings---\equals\ empties into \during, \starts, \finishes\ as the point falls inside \(Y\)---but the union defining the parent is unchanged, so no mass leaves the parent.
\end{proof}

\subsection{NLP: Relation Families, Verbal Grading, and Contact}
\label{sec:nlp-grading}

For NLP it is useful to separate three dimensions that are often conflated. The first is the qualitative relation family, for example precedence, overlap, containment, or succession. The second is the strength of evidence for that family, represented by its probability relative to alternatives at the same taxonomic level. The third is boundary contact, such as \meets, \starts, \finishes, or \equals, which requires equality margins in continuous models.

Let the top-level alternatives be the three coarse families of Section~\ref{sec:taxonomy}, each a union of leaves,
\begin{equation}
    \mathcal T=\{\mathrm{before},\mathrm{overlap},\mathrm{after}\},
\end{equation}
where \(\mathrm{before}=\{\before,\allowbreak\meets\}\), \(\mathrm{after}=\{\metby,\allowbreak\after\}\) are its converse, and \(\mathrm{overlap}\) collects the remaining nine (non-separated) leaves. Their probabilities are the corresponding leaf sums (Equation~\eqref{eq:coarse-sum}) and therefore partition the unit mass, \(p_b+p_o+p_a=1\), with
\begin{equation}
    p_b=P(\mathrm{before}),\qquad
    p_o=P(\mathrm{overlap}),\qquad
    p_a=P(\mathrm{after}).
\end{equation}
A before-type expression is licensed when
\begin{equation}
    p_b > \max(p_o,p_a),
\end{equation}
while the verbal modifier can be tied to the \emph{strength} of this dominance, which we measure by the margin
\begin{equation}
    \delta = p_b - \max(p_o,p_a)\ \in (0,1]
\end{equation}
by which the \before\ family leads its closest competitor. The adverbials at issue here are precisely the graded temporal-relation expressions of Section~\ref{sec:motivation}: phrases such as \emph{just before}, \emph{shortly before}, \emph{rather before}, \emph{some time before}, and \emph{long before} all place one event before another and differ only in how far before. They therefore need not denote different Allen leaves. They can denote different grades of the same \before\ relation family. A weak but dominant value of \(p_b\) supports \emph{shortly before}; a near-certain value of \(p_b\) supports \emph{long before}. This makes \emph{shortly before} distinct from \meets: \meets\ is a contact relation, while \emph{shortly before} is a verbal grading of precedence. Spatial language has the same structure: \emph{just above}, \emph{far to the left of}, and \emph{right beside} grade qualitative spatial relations by the same mechanism, so the construction here transfers directly to a spatial Allen-style algebra (Section~\ref{sec:spatial}).

One possible schematic mapping places three ordered thresholds \(0<\theta_1<\theta_2<\theta_3<1\) on the margin \(\delta\):
\begin{equation}
\begin{array}{rcll}
    0      < \delta \le \theta_1 &\Rightarrow& \text{``just / shortly before''} & (\text{weakly dominant}),\\
    \theta_1 < \delta \le \theta_2 &\Rightarrow& \text{``rather before''}        & (\text{moderately dominant}),\\
    \theta_2 < \delta \le \theta_3 &\Rightarrow& \text{``some time before''}     & (\text{strongly dominant}),\\
    \theta_3 < \delta \le 1        &\Rightarrow& \text{``long before''}          & (\text{near-total dominance}).
\end{array}
\label{eq:adverbial-cuts}
\end{equation}
Here ``moderately'' and ``strongly dominant'' are simply the middle bands \(\theta_1<\delta\le\theta_2\) and \(\theta_2<\delta\le\theta_3\); a representative calibration is \((\theta_1,\theta_2,\theta_3)=(0.2,0.5,0.8)\), and \(\delta\to1\) exactly as \(p_b\to1\).
The thresholds are not universal lexical constants. They may depend on language, genre, and task. The important point is that the algebra supplies a normalized graded variable that can be calibrated empirically.

Adjacency-dependent relations behave differently. For \meets, \metby, \starts, \startedby, \finishes, \finishedby, and \equals, the relevant probability depends directly on a tolerance or soft boundary kernel:
\begin{equation}
    P(\meets_{\tau})=P(|a_Y-b_X|\le \tau,\ a_X<a_Y,\ b_X<b_Y),
\end{equation}
\begin{equation}
    P(\starts_{\tau})=P(|a_Y-a_X|\le \tau,\ b_X<b_Y).
\end{equation}
Thus contact words such as \emph{touches}, \emph{starts with}, \emph{ends with}, or \emph{at the same time} are margin-sensitive, whereas non-contact verbal gradings are primarily relation-probability-sensitive. The same framework supports both, but it keeps their semantics separate.

This distinction gives a bidirectional NLP interface. In interpretation, linguistic evidence can be translated into constraints or likelihood terms over relation probabilities and boundary margins. In generation, a system can choose the relation family by the most probable taxonomic predicate and choose the verbal modifier by the strength of that probability, while reserving contact expressions for high margin-dependent contact probability.

A fourth cue is structural rather than gradable: an utterance usually reveals \emph{which} temporal quantities the speaker has fixed, and hence how its interval should be parameterized (Section~\ref{sec:parameterization}). ``I arrived on the 11th and stayed about two months'' anchors a start and leaves the end uncertain; ``ten days during the summer'' fixes a duration and leaves its position uncertain; ``a week's holiday, home by Wednesday evening'' anchors an end. The same sentence therefore supplies both the relation-level evidence above and the boundary-level uncertainty beneath it---which anchor is sharp, which quantity floats---and the framework consumes both through the same Gaussian parameters.

\subsection{From Discrete Cuts to Graded Membership}
\label{sec:graded-membership}

The mapping~\eqref{eq:adverbial-cuts} is a \emph{hard partition}---each \(\delta\) yields one adverbial---which human usage overruns: asked which word fits a configuration, speakers spread across adjacent adverbials (\emph{just} and \emph{recently} both fit a recent event, the balance tipping smoothly as it recedes). This is a graded variability a hard partition cannot express, since fit to the measured responses it collapses each configuration onto a single winner. The remedy, confirmed on the human data of the companion paper~\cite{paa2026nlp}, is a \emph{soft} read-out: one smooth membership kernel per adverbial over the margin, \(A_a(\delta)\in[0,1]\), overlapping its neighbours, so that a configuration yields a distribution \(P(a\mid\delta)\propto A_a(\delta)\) over the competing adverbials, with~\eqref{eq:adverbial-cuts} recovered only as the kernel width vanishes---smooth bells on the (log-)margin, a skew-normal per adverbial, reproduce the measured usage closely where the cuts fail. An embodied or conversational agent closes the loop from this distribution, emitting a \emph{single} word either as the mode \(\operatorname{argmax}_a P(a\mid\delta)\), the canonical choice, or by sampling \(a\sim P(a\mid\delta)\) to reproduce natural variability; and the same kernels run backward, a heard adverbial contributing the likelihood \(A_a(\delta)\) for the inverse map to concrete times.

This soft read-out is not merely a repair of the hard cuts: it can be estimated directly from how people speak, and a companion paper~\cite{paa2026nlp} confirms it on a human corpus of graded recency judgements spanning the before-family adverbials \emph{just}, \emph{recently}, \emph{some time ago}, and \emph{long time ago}. Each adverbial is given one smooth membership kernel over the margin---a \emph{skew-normal}, a Gaussian allowed to fall off at different rates on its two sides, since adverbial preference is asymmetric (\emph{just} drops sharply toward the very recent yet fades slowly as an event recedes toward \emph{recently}). Fit to the data these kernels reproduce the measured usage closely, picking the expressed word at a macro-\(F_1\) of \(0.84\), where the one-hot cuts of Equation~\eqref{eq:adverbial-cuts} cannot express the graded spread at all. With soft margins we can thus model the variability of adverbial expression accurately, and a human corpus of graded temporal adverbials confirms it. Figure~\ref{fig:adverbial-usage} is one illustrative event: as the distance grows the expressed word walks \emph{just}\(\to\)\emph{recently}\(\to\)\emph{some}\(\to\)\emph{long}, and the fitted read-out tracks that walk. The companion paper~\cite{paa2026nlp} develops the estimation, metrics, and kernel comparison in full.

\begin{figure}[t]
\centering
\includegraphics[width=\linewidth]{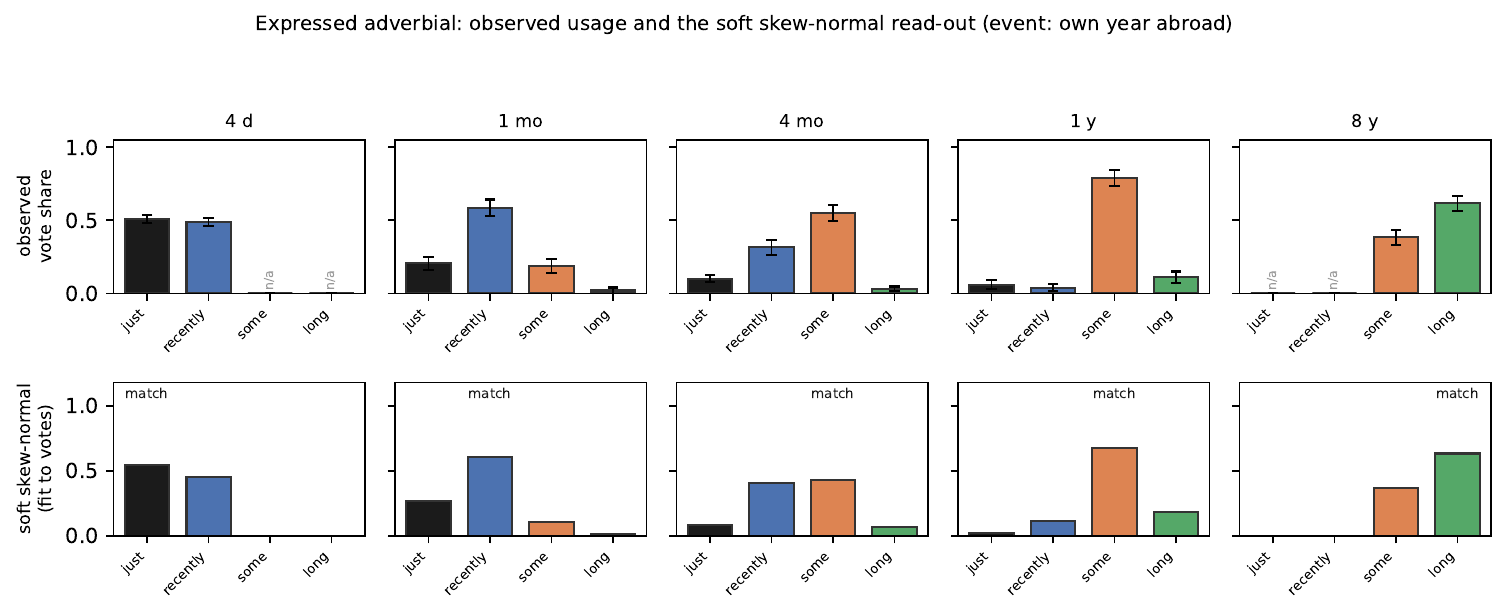}
\caption{Graded recency adverbials that persons use to describe the recency of something that happens after a single event---e.g.~a person asked at a given time period after a \emph{year abroad}---as the time period recedes from a few days into the past (left) to several years ago (right). \emph{Top:} which word people actually reach for, shown as the share of speakers preferring each one; the preference slides smoothly from \emph{just} through \emph{recently} and \emph{some time ago} to \emph{long time ago} as the event ages (gaps mark distances at which a word was not offered). \emph{Bottom:} the soft read-out---one skew-normal membership curve per word over the beforeness margin---reproducing that sliding preference, its mode matching the observed word at every distance. An illustrative example; the companion paper~\cite{paa2026nlp} reports the full data and quantitative evaluation.}
\Description{Stacked curves over elapsed time showing which recency adverbial, from just now to long ago, people use for an event as it recedes into the past, with a smooth hand-over from one adverbial to the next.}
\label{fig:adverbial-usage}
\end{figure}

\subsection{Scale Invariance and Context-Sensitivity Normalization}
\label{sec:scale-invariance}

A desirable property of the distribution-induced formulation is scale invariance. Suppose all metric temporal quantities are scaled by a positive constant \(c>0\):
\begin{equation}
    t' = c t,
    \qquad
    d' = c d,
    \qquad
    \mu' = c\mu,
    \qquad
    \sigma' = c\sigma,
    \qquad
    \tau' = c\tau.
\end{equation}
Then all standardized Gaussian arguments used in the relation probabilities remain unchanged. For the point--point before relation,
\begin{equation}
    P(x<y)=
    \PhiCDF\left(
        \frac{\mu_y-\mu_x}{\sqrt{\sigma_x^2+\sigma_y^2}}
    \right),
\end{equation}
and after scaling,
\begin{equation}
    \frac{c(\mu_y-\mu_x)}{\sqrt{c^2\sigma_x^2+c^2\sigma_y^2}}
    =
    \frac{\mu_y-\mu_x}{\sqrt{\sigma_x^2+\sigma_y^2}}.
\end{equation}
The same cancellation holds for interval relations. For example, the thresholded separation term
\begin{equation}
    \beta_+(\tau)=
    \frac{
        (\mu_Y-\mu_X)-\tau-\frac12(\mu_{d_X}+\mu_{d_Y})
    }{
        \sqrt{\sigma_X^2+\sigma_Y^2+\frac14(\sigma_{d_X}^2+\sigma_{d_Y}^2)}
    }
\end{equation}
is invariant under simultaneous scaling of means, durations, standard deviations, and thresholds. The correlation matrices are invariant as well, because their entries are ratios of standard deviations.

This provides a principled normalization of context-sensitivity. A one-month gap before a two-month interval and a one-hour gap before a two-hour interval should receive the same graded beforeness if the uncertainty and duration parameters scale in the same way. The model therefore does not require a fixed absolute meaning for \emph{shortly} or \emph{long}. Instead, verbal grading can be grounded in dimensionless relation probabilities, while domain- or discourse-specific scales enter through the distributions and tolerances themselves.

The property is not meant to remove all context. Language still depends on event type, discourse goals, and pragmatic expectations. Rather, the algebra separates two sources of context-sensitivity. Metric scale is normalized by standardized probability calculations; lexical and pragmatic variation can then be learned as mappings from relation probabilities, expected gaps, and task context to verbal expressions.

\subsection{Where the Reference Scale Comes From: Extents, Spreads, and the Point Limit}
\label{sec:reference-scale}

Scale invariance (Section~\ref{sec:scale-invariance}) holds because every argument of a relation probability is a standardized ratio. Grading a relation into verbal degrees---\emph{shortly} versus \emph{long before}, \emph{well inside} versus \emph{dead centre}---likewise requires a dimensionless \emph{margin}, and it is worth making explicit \emph{which} quantity supplies the yardstick, because the answer differs across relation families and changes under the interval\(\to\)point limits of Appendix~\ref{app:limits}. Two yardsticks arise.

\paragraph{Separation is measured against the spread.} For the precedence and succession families the graded quantity is the \emph{gap} between the two objects, standardized by the combined uncertainty. In the point--interval case (Section~\ref{sec:point-interval}) the standardized separation at zero margin is
\begin{equation}
    \beta=\frac{\Delta_{Yx}-\tfrac12\mu_{d_Y}}{\sigma_{pI}}
        =\frac{\text{expected gap}}{\text{combined spread}},
    \label{eq:sep-margin}
\end{equation}
and the dominance margin \(\delta\) of Section~\ref{sec:nlp-grading} reads this ratio through a saturating link (\(\delta=2\PhiCDF(\beta)-1\) for two points). The yardstick is the spread \(\sigma\); no interval extent is involved.

\paragraph{Containment is measured against an extent.} For the containment and overlap families the natural graded quantities are ratios of a boundary offset to a \emph{duration}:
\begin{equation}
    \kappa=\frac{d_X}{d_Y}\in[0,1]\quad(\text{coverage}),
    \qquad
    \gamma=\frac{2\,(t_X-t_Y)}{d_Y}\in[-1,1]\quad(\text{centrality}),
    \label{eq:contain-margins}
\end{equation}
together with the insideness margins \((a_X-a_Y)/d_Y\) and \((b_Y-b_X)/d_Y\). These are ratios of times, hence dimensionless, and---unlike the separation gap---\emph{bounded}: \emph{throughout} demands \(\kappa\to1\), \emph{in the middle of} demands \(\gamma\to0\), \emph{well inside} demands large insideness margins. The yardstick is the container's own extent \(d_Y\), already present in the configuration.

\paragraph{The point limit selects the yardstick.} Which yardstick survives is decided by what degenerates. Sending the \emph{contained} object to a point (\(d_X\to0\)) sends coverage \(\kappa\to0\)---a point covers nothing---so the \emph{throughout} family collapses. But centrality \(\gamma\) and the insideness margins depend on the \emph{container}'s extent \(d_Y\), not the point's, and survive intact: \emph{a point inside an interval still has a well-defined, bounded, scale-free position within it}, because the interval supplies the yardstick. Sending \emph{both} objects to points removes every extent at once, leaving only the spread \(\sigma\).

\paragraph{The principle, and the one configuration that is not self-scaling.} A graded relation carries an intrinsic margin normalized by a quantity the configuration already contains: an interval \emph{extent} for the containment and position gradings, present whenever the container is not a point; the joint \emph{spread} for the separation gradings, present always. The separation margin \(\delta\) is thus always scale-free---but it \emph{saturates} (\(\delta\to1\) for well-separated objects), so resolving fine degrees in the far tail---\emph{long} versus \emph{very long before}---needs a \emph{metric} distance scale beyond the probability. When an interval extent survives it supplies one intrinsically: a longer event sets a longer scale, as expected---\emph{shortly after} a decade-long war spans years, \emph{shortly after} a brief meeting only minutes. Between two \emph{bare points}---a past event and the utterance instant---no extent remains to play this role, and the metric scale must come from outside the geometry, from the event \emph{type}, and enters on a logarithmic (ratio) axis, in keeping with the Weber--Fechner character of human time perception~\cite{fechner1860elemente,zauberman2009discounting}. This is the single configuration in which grading is not intrinsically scale-free---the fully collapsed, saturated separation of two points---and it is exactly where a factorized, event-type-conditioned account of temporal distance must locate a per-event characteristic timescale \cite{kenneweg2025factorized}: the scale a point event lacks by geometry, restored by its kind. Everywhere an interval extent survives, no external scale is needed.

\paragraph{Why that imported scale is logarithmic---and why the sigmoid is not already it.} The logarithm acts only on this residual magnitude, never on the relation itself. Which relation holds is fixed by the \emph{signs} of the boundary differences, and the orthant probabilities that decide it require those differences to be Gaussian in \emph{linear} time---a logarithm is not even defined across a sign change. Only once a relation is settled does a positive magnitude remain to be graded, and the scale on which it is read is then free. One might expect the saturating link to supply the compression already: with \(p=\PhiCDF(\Delta/\sigma)\) the before-probability, \(\delta=2p-1\) does flatten large separations. But it flattens with a Gaussian \emph{tail}, not logarithmically. Measuring resolution as the change in \(p\) per \emph{ratio} of distance, \(\mathrm{d}p/\mathrm{d}\ln\Delta=(\Delta/\sigma)\,\phi(\Delta/\sigma)\), the linear-time law vanishes \emph{super-exponentially} in the far tail: configurations of very different magnitude---\emph{long} versus \emph{very long before}---collapse onto \(p\approx1\) and become mutually indistinguishable. A logarithmic argument, \(p=\PhiCDF(\ln(\Delta/t_0)/s)\) (with the dimensionless log-width \(s\)), instead spreads resolution \(\tfrac1s\phi(\ln(\Delta/t_0)/s)\) \emph{evenly across decades}. The saturating sigmoid is thus \emph{a} compression but the wrong one; scale-free grading of the far tail requires the logarithm---precisely the external ratio scale the collapsed point pair must import, and the one on which human elapsed-time judgement runs \cite{fechner1860elemente,zauberman2009discounting}.

\subsection{From Geometry to a Testable Hypothesis}
\label{sec:scale-hypothesis}

The yardsticks of Section~\ref{sec:reference-scale} are \emph{derived}: the algebra fixes which quantity normalizes each family's margin. Whether human grading actually uses them is an empirical question, and because the geometry is sharp so are the predictions. We state them as a falsifiable hypothesis---identifying where the two extents combine, where an extent is present yet contributes nothing, and where a further, non-geometric factor must enter.

\paragraph{Two extents, and the sign the relation fixes.} When both objects are extended, both extents feed the scale, and the relation family decides how. A \emph{separation} grades the gap, which carries the duration \emph{sum} \(G=Z-\tfrac12(d_X+d_Y)\): lengthening \emph{either} object closes the gap alike, so the two extents enter with the \emph{same} sign. A \emph{containment} is graded against the container's extent alone, through the coverage \(\kappa=d_X/d_Y\): enlarging the container \(d_Y\) supports the relation while enlarging the contained \(d_X\) opposes it, so the extents enter with \emph{opposite} signs. The prediction is a measurable sign structure---summed extents for separation, container-minus-contained for containment---keyed to the relation and visible wherever the two extents are varied independently.

\paragraph{Present but inert: the pinning of the boundary.} An extent contributes only when the relation turns on the boundary that extent \emph{moves}, and which boundary that is depends on how the object is pinned (Section~\ref{sec:parameterization}). A point behind an \emph{end}-pinned interval grades on the gap to the pinned end, and the interval's duration---extending backward from that end---never enters it; a point before a \emph{start}-pinned interval is the mirror case. The extent becomes active only when the relation turns on the \emph{unpinned}, duration-carried edge, as for a point after a start-pinned interval. The prediction---duration inert in the pinned-edge configuration, active in the unpinned one---is borne out by companion analysis of the FuzzyLLI corpus \cite{kenneweg2025factorized}: elapsed-time judgements of past events pin the recent edge, and the event's duration leaves graded pastness unchanged, whereas re-pinning the onset makes the same duration matter. Pinning these parameters to their known physical values rather than fitting them is enough: on the spatial companion corpus, fixing the boundary to the given geometry and freeing only the positional uncertainty still recovers the human proximity judgements (\(R^2\approx0.97\), matching the free fit), so the edge-shift prediction holds in the operational regime an agent actually faces~\cite{paa2026nlp}.

\paragraph{Where geometry supplies no scale: a role for salience.} Between two \emph{bare points} no extent survives (Section~\ref{sec:reference-scale}) and the scale must be imported from the event \emph{kind}, on a logarithmic axis. We add a tentative refinement: what the kind supplies is better read as the event's \emph{salience} than its physical extent. In the same companion analysis the fitted per-event timescale is predicted by rated importance, frequency, and richness rather than by duration---a brief but consequential event stays ``recent'' as long as a protracted one. An extent may therefore \emph{feed} the imported scale (longer events are often more salient) without \emph{setting} it. The working hypothesis is thus that the grading scale is a \emph{relation-dependent combination of the surviving extents, modulated by salience}, collapsing to a purely salience- and kind-set scale in the fully degenerate point pair.

\paragraph{A clean test.} Separating these claims needs objects of independently varied extent in both separation and containment relations---two disks of variable radius in a spatial judgement, say, where \emph{near} should grade the gap against the \emph{summed} radii while \emph{inside} or \emph{around} should grade against the \emph{container} radius alone, and where matching salience across sizes isolates the geometric contribution. The algebra fixes the geometric skeleton of these predictions; behavioural data must supply the salience weighting and settle the degenerate cases.

\subsection{Robustness and Inadequacy of Flat Relation Assignment}

The inadequacy of flat relation assignment is not merely numerical. It is structural. If a model predicts thirteen independent relation scores, it does not know that \meets\ is a specialization of precedence/contact, or that \starts\ is a specialization of containment. It may therefore report low confidence in every fine relation while missing that a coarse relation has high probability.

The distribution-induced taxonomy avoids this. Coarse probabilities are computed as sums over leaves or directly as broader boundary inequalities. This lets reasoning proceed at the appropriate level of granularity: first infer that \(X\) is probably contained in \(Y\), then refine into \starts, \during, \finishes, or \equals; first infer that \(X\) precedes or touches \(Y\), then refine into strict \before\ or approximate \meets. This hierarchical behavior is especially important near boundary cases, where classical Allen leaves change discontinuously but the parent predicate remains stable.

\subsection{Selecting a Best-Fitting Relation}
\label{sec:best-relation}

A natural request is a single best-fitting relation for a configuration. The obvious choice, the maximiser of the thirteen leaf probabilities, is biased by relation measure and should be used with care. The six relations defined by strict inequalities (\before, \overlaps, \during, \contains, \overlappedby, \after) are full-dimensional regions, whereas the seven equality and contact relations occupy thin confidence bands of width on the order of \(\tau\). The probability of \meets\ or \starts\ is therefore structurally small, and a flat \(\argmax\) over all thirteen leaves systematically disfavours contact relations and depends on \(\tau\): as \(\tau\to0\) a contact relation can never win. Comparing \(P(\before)\) directly with \(P(\meets)\) compares the mass of a region with that of a boundary. A flat maximum is thus meaningful only among siblings of comparable measure, for example within a single taxonomic level such as precedence versus overlap versus succession.

Two selection rules respect the structure. The first is a top-down maximum-a-posteriori decoding over the taxonomy: choose the most probable coarse family by its summed leaf mass, descend into it, and refine, so that every decision weighs alternatives of comparable measure. Because the coarse probabilities are large and stable, the family decision is robust even where the leaf decision is delicate. The second is the relation of the most probable arrangement: classify the mode of the latent distribution, that is, the modal midpoints and durations, which yields a definite configuration whose relation is read off without any band-width bias.

These two rules answer different questions. The taxonomic maximum identifies the most probable relation \emph{category} under uncertainty, while the modal-arrangement relation identifies the relation of the single most likely \emph{configuration}. They agree for well-separated full-dimensional relations and can diverge near contact: when the modal boundaries align within \(\tau\), the modal-arrangement relation is a contact relation such as \meets, while the probability mass may still favour a thicker neighbour such as \before\ or \overlaps. Both are legitimate; the taxonomy makes explicit that the robust quantities are the coarse predicates, and that fine selection should proceed conditionally within them rather than as a flat comparison across heterogeneous leaves. The accompanying implementation provides all three selectors.

\subsection{Extension to Spatial Relations}
\label{sec:spatial}

The construction transfers from time to space without new probabilistic machinery. Allen's algebra already has a well-developed qualitative spatial counterpart: applying the interval relations independently to each Cartesian axis yields the \emph{rectangle algebra} for axis-aligned rectangles, with \(13^2=169\) base relations, and in \(n\) dimensions the \emph{block algebra} with \(13^n\) relations \cite{gusgen1989spatial,balbiani1998rectangle,balbiani1999block}. Two further calculi describe orthogonal facets of the same scene: the region connection calculus represents topology---disconnected, touching, overlapping, contained \cite{randell1992rcc}---and the cardinal direction calculus represents projective direction \cite{skiadopoulos2004cardinal}. These spatial relations are the analogue of the adverbial families of Section~\ref{sec:nlp-grading}: \emph{just above}, \emph{far to the left of}, and \emph{right beside} grade a directional relation exactly as \emph{shortly before} grades precedence.

Our generative semantics lifts to the rectangle algebra directly. A rectangle is two intervals, one per axis, each carrying the same midpoint-and-duration uncertainty used here; every edge is therefore an affine function of the latent Gaussians, and a rectangle relation \((r_x,r_y)\) holds on the intersection of the two axes' linear-inequality regions. Its probability is the product of the per-axis orthant integrals when the axes are independent, and a single higher-dimensional Gaussian orthant integral when they are correlated---the three- and four-dimensional integrals of the temporal case become six- and eight-dimensional. The \(13^n\) relations remain a complete partition for any tolerance, and contact relations again acquire positive measure through the tolerance band. The covariance-agnostic engine of Appendix~\ref{app:parameterizations} already evaluates the correlated case unchanged.

What is novel is the combination, not the ingredients. The qualitative rectangle and block algebras are entirely crisp \cite{balbiani1998rectangle,balbiani1999block}. Uncertainty has reached spatial relations only as fuzzy membership---Bloch's morphological ``fuzzy landscape'' for directional relations \cite{bloch1999fuzzy}, and fuzzy two-dimensional Allen relations \cite{salamat2012fuzzy}---or as probabilities \emph{annotated} onto qualitative relations and propagated through a network, as in probabilistic region connection calculus \cite{girlea2015prcc}. The one line of work that, like ours, induces relation probabilities from a distribution over geometry assigns them to topological relations through a raster error model rather than in closed form \cite{winter2000uncertain}; and the computer-vision practice of modelling box corners as Gaussians \cite{hall2020probabilistic} supplies our affine-Gaussian premise but never connects it to a relation algebra. Learned generative models of spatial language likewise fit a separate density per relation, without a closed-form partition \cite{dawson2013generative}. A distribution-induced spatial algebra---rectangle and block relations as closed-form Gaussian orthant probabilities over a complete partition---therefore appears to be unoccupied, and is the natural next step for this work.

\subsection{Limitations and Future Work}

Several modelling choices can be relaxed. Midpoints and durations are taken to be independent and the primitive Gaussian variables uncorrelated; correlated temporal quantities require only the appropriate joint covariance in place of the diagonal one. Computation rests on multivariate Gaussian CDFs, which lack an elementary closed form beyond the bivariate case and are evaluated by specialised quadrature; this is inexpensive for the tri- and quadri-variate systems used here but is worth noting for large temporal networks.

Three extensions are natural. First, because the relation probabilities are smooth functions of the temporal parameters, their gradients provide a differentiable signal of temporal tendency, connecting to robustness gradients in signal temporal logic \cite{fainekos2009robustness,donze2010robust,haghighi2019smooth,leung2020backprop}. Second, the constraint-compilation view supports posterior updating from observed relations and network-level inference over many events, in the spirit of probabilistic interval networks \cite{ryabov2004probabilistic}. Third, the taxonomy and scale invariance suggest a calibrated mapping between relation probabilities and vague temporal language, building on factorized adverbial models \cite{kenneweg2025factorized} and fuzzy interval algebras \cite{schockaert2008fuzzifying,badaloni2006iafuz}; benchmark evidence that language models resolve vague temporal references markedly worse than explicit ones \cite{kenneweg2025traveler} underscores the need for such a principled mapping.

\subsection{Conclusion}

We have presented a probabilistic Allen algebra in which uncertain temporal points and intervals induce probabilities over qualitative relations. The approach unifies point--point, point--interval, and interval--interval reasoning; gives closed forms using error functions and multivariate Gaussian CDFs; treats equality/contact through confidence borders; and expresses relation taxonomy directly in terms of calculated probabilities. For NLP, the framework distinguishes contact from verbal grading and provides scale-invariant relation probabilities that normalize much of the context-sensitivity of vague temporal adverbials. This positions the framework as a bridge between metric temporal uncertainty, qualitative interval reasoning, temporal-relation taxonomies, and vague temporal language.

\appendix

\section{The Thirteen Relations, Grouped}
\label{app:relation-chart}

Figure~\ref{fig:relation-chart} lays out the same thirteen relations as the taxonomy tree of Figure~\ref{fig:relation-taxonomy}, but as nested boxes: the two-way split into separated and non-separated configurations boxes the coarse families, and each family boxes its Allen leaves. Nested boxes are nested partition cells, so a family's probability is the sum of the probabilities of the sketches it contains.

\begin{figure}[htbp]
\centering
\fbox{\begin{tabular}{@{}c@{}}
{\footnotesize\bfseries SEPARATED\,/\,externally touching}\\[4pt]
\taxfam{blue!8}{$X$ precedes / touches $Y$}{\begin{tabular}{@{}c@{\quad}c@{}}\taxcellf{0}{1.5}{\before} & \taxcellf{0.5}{2}{\meets}\end{tabular}}\qquad
\taxfam{blue!8}{$X$ follows / met by $Y$}{\begin{tabular}{@{}c@{\quad}c@{}}\taxcellf{4}{5.5}{\metby} & \taxcellf{4.5}{6}{\after}\end{tabular}}
\end{tabular}}

\vspace{6pt}

\fbox{\begin{tabular}{@{}c@{}}
{\footnotesize\bfseries NON-SEPARATED}\\[4pt]
\begin{tabular}{@{}c@{\ \ }c@{\ \ }c@{\ \ }c@{}}
\taxfam{orange!16}{$X\subset Y$ (strict)}{\begin{tabular}{@{}c@{\ }c@{\ }c@{}}\taxcellf{2}{3}{\starts} & \taxcellf{2.5}{3.5}{\during} & \taxcellf{3}{4}{\finishes}\end{tabular}}
&
\taxfam{green!14}{partial overlap}{\begin{tabular}{@{}c@{\ }c@{}}\taxcellf{1}{3}{\overlaps} & \taxcellf{3}{5}{\overlappedby}\end{tabular}}
&
\taxfam{black!8}{$X{=}Y$}{\taxcellf{2}{4}{\equals}}
&
\taxfam{orange!16}{$X\supset Y$ (strict)}{\begin{tabular}{@{}c@{\ }c@{\ }c@{}}\taxcellf{1}{4}{\finishedby} & \taxcellf{1}{5}{\contains} & \taxcellf{2}{5}{\startedby}\end{tabular}}
\end{tabular}
\end{tabular}}
\caption{The thirteen relations grouped by their partition. In every cell the reference interval $Y$ (outlined) is held fixed---dotted ticks mark its edges---and the event $X$ (filled) moves against it. Colour marks the coarse family, converse families sharing a hue and mirrored left$\leftrightarrow$right (\before/\after, \starts/\startedby, \dots). Nested boxes mirror the taxonomy of Figure~\ref{fig:relation-taxonomy}: a family's probability is the sum of its members'---e.g.\ $P(X\subseteq Y)=P(\starts)+P(\during)+P(\finishes)+P(\equals)$. The two containment blocks are the \emph{strict} directions; \equals\ is the separate central cell they share as a boundary.}
\Description{A chart of the thirteen relations grouped into coloured coarse-predicate boxes; each cell sketches the event X against a fixed reference interval Y with dotted ticks at the edges of Y.}
\label{fig:relation-chart}
\end{figure}

\section{Limiting Cases When Intervals Collapse to Points}
\label{app:limits}

Degenerate durations should be handled as limits. If \(d_X=d_Y=0\), then \(X=\{t_X\}\) and \(Y=\{t_Y\}\). The only non-zero exact relations are
\begin{equation}
    P(X\ \before\ Y)=\PhiCDF\left(\frac{\Delta}{\sigma_Z}\right),
    \qquad
    P(X\ \after\ Y)=\PhiCDF\left(\frac{-\Delta}{\sigma_Z}\right),
\end{equation}
where \(\Delta=\mu_Y-\mu_X\) and \(\sigma_Z^2=\sigma_X^2+\sigma_Y^2\). Approximate equality is
\begin{equation}
    \Prob(|t_Y-t_X|\le\tau)
    =\PhiCDF\left(\frac{\tau-\Delta}{\sigma_Z}\right)
     -\PhiCDF\left(\frac{-\tau-\Delta}{\sigma_Z}\right).
\end{equation}
Exact equality has probability zero unless both points are deterministically identical.

If only \(X\) collapses to a point, then \(X\subseteq Y\) becomes point-in-interval:
\begin{equation}
    P(X\subseteq Y)=\Prob\left(|t_Y-t_X|\le\frac{d_Y}{2}\right),
\end{equation}
while \(Y\subseteq X\) is zero unless \(Y\) also collapses to the same point. Boundary correspondence becomes approximate equality between the point and \(a_Y\) or \(b_Y\).

\paragraph{The thirteen-to-five fold in detail.}
This is the sign bookkeeping behind Proposition~\ref{prop:reduction}. Setting \(d_X=0\) in the midpoint parameterization gives \(a_X=b_X=t_X\), and the four boundary differences of Section~\ref{sec:interval-interval} collapse onto two through the identities
\begin{equation}
    A=a_Y-t_X=G,\qquad H=t_X-b_Y=-B,
\end{equation}
leaving the independent pair \((A,B)=(a_Y-t_X,\,b_Y-t_X)\) with \(B-A=d_Y\ge0\), so \(A\le B\). Reading Table~\ref{tab:prob-allen-ineq} under these identities sorts the thirteen relations exactly. The five survivors reduce to point position within \(Y\):
\begin{center}
\begin{tabular}{lll}
\toprule
relation & inequality at \(d_X=0\) & point \(t_X\) relative to \(Y\) \\
\midrule
\before   & \(A>\tau\)               & before \(a_Y\) \\
\starts   & \(|A|\le\tau,\ B>\tau\)  & at \(a_Y\) \\
\during   & \(A<-\tau,\ B>\tau\)     & strictly inside \\
\finishes & \(A<-\tau,\ |B|\le\tau\) & at \(b_Y\) \\
\after    & \(B<-\tau\)              & after \(b_Y\) \\
\bottomrule
\end{tabular}
\end{center}
The other eight are infeasible under \(A=G\), \(H=-B\), and \(A\le B\): \meets\ (\(A>\tau,\,|G|\le\tau\)) and \overlaps\ (\(A>\tau,\,G<-\tau\)) both set \(G=A>\tau\); \finishedby\ (\(A>\tau,\,|B|\le\tau\)) and \contains\ (\(A>\tau,\,B<-\tau\)) violate \(A\le B\); \startedby\ (\(|A|\le\tau,\,B<-\tau\)) forces \(A\le B<-\tau\); \overlappedby\ (\(B<-\tau,\,H<-\tau\)) and \metby\ (\(B<-\tau,\,|H|\le\tau\)) both set \(H=-B>\tau\); and \equals\ (\(|A|\le\tau,\,|B|\le\tau\)) requires \(d_Y=B-A\le2\tau\). Seven are thus outright empty on \(\{d_X=0\}\); the eighth, \equals, is feasible only for a near-degenerate \(Y\) (\(d_Y\le2\tau\)) and otherwise contributes vanishing probability, its mass passing to \during, \starts, and \finishes\ as the collapsing \(X\) falls inside \(Y\). For \(d_X>0\) each of the eight occupies a band of width \(O(d_X)\) that closes as \(d_X\to0\). Because every coarse taxonomy node is a fixed union of leaves, dominated convergence carries its probability continuously through the limit; the reduction therefore conserves probability within each parent while reallocating it among the leaves above. The symmetric collapse \(d_Y\to0\) eliminates \(B\) against \(A\) the same way and reduces the five point--interval relations to \(\{\before,\allowbreak\equals,\allowbreak\after\}\); applying both collapses recovers the two-point case at the head of this appendix.

\section{Alternative Interval Parameterizations}
\label{app:parameterizations}

The midpoint--duration parameterization used in the body is convenient but not required. An interval has two degrees of freedom, and any pairing of one location quantity with the duration fixes it: midpoint and duration \((t,d)\), start and duration \((s,d)\), or end and duration \((e,d)\), related by \(s=t-d/2\) and \(e=t+d/2\)---or the two boundaries themselves, start and end \((s,e)\), with the duration as their difference \(d=e-s\). The duration \(d\ge0\) is common to the first three, whose location quantities differ only by \(\pm d/2\); in the fourth it is derived, and \(d\ge0\) reads as the end not preceding the start. In each case both boundaries are affine in the latent variables, so every Allen relation stays a conjunction of linear inequalities and its probability a conditional multivariate Gaussian orthant probability.

Concretely, the thirteen relations see the two intervals only through the four boundary differences
\begin{equation}
    A=a_Y-a_X,\qquad B=b_Y-b_X,\qquad G=a_Y-b_X,\qquad H=a_X-b_Y
\end{equation}
(the quantities tabulated in Table~\ref{tab:prob-allen-ineq}), so a parameterization is pinned down once these are written in its latent variables. Throughout, \(d_X,d_Y\ge0\) denote the truncated-Gaussian durations, shared by the three location--duration forms.

\paragraph{Start and duration.} Take \(X=[s_X,\,s_X+d_X]\), so \(a_X=s_X\) and \(b_X=s_X+d_X\). With the start difference \(S=s_Y-s_X\sim\N(\mu_{s_Y}-\mu_{s_X},\,\sigma_{s_X}^2+\sigma_{s_Y}^2)\),
\begin{equation}
    A=S,\qquad B=S+d_Y-d_X,\qquad G=S-d_X,\qquad H=-S-d_Y.
\end{equation}
Here the latent start difference \emph{is} the start/start primitive \(A\).

\paragraph{End and duration.} Take \(X=[e_X-d_X,\,e_X]\), so \(a_X=e_X-d_X\) and \(b_X=e_X\). With the end difference \(E=e_Y-e_X\sim\N(\mu_{e_Y}-\mu_{e_X},\,\sigma_{e_X}^2+\sigma_{e_Y}^2)\),
\begin{equation}
    A=E+d_X-d_Y,\qquad B=E,\qquad G=E-d_Y,\qquad H=-E-d_X.
\end{equation}
Symmetrically, the latent end difference is the finish/finish primitive \(B\). The midpoint form of the body sits between the two: with \(Z=t_Y-t_X\), \(A=Z+\tfrac12(d_X-d_Y)\), \(B=Z-\tfrac12(d_X-d_Y)\), \(G=Z-\tfrac12(d_X+d_Y)\), and \(H=-Z-\tfrac12(d_X+d_Y)\).

\paragraph{Start and end.} Take \(X=[s_X,\,e_X]\) with independent Gaussian boundaries, so \(a_X=s_X\), \(b_X=e_X\), and the duration is derived, \(d_X=e_X-s_X\). The four differences are then the boundary differences read literally,
\begin{equation}
    A=s_Y-s_X,\qquad B=e_Y-e_X,\qquad G=s_Y-e_X,\qquad H=s_X-e_Y,
\end{equation}
and the truncation \(d_X\ge0\) is the half-space \(e_X\ge s_X\): the law is that of independent boundaries conditioned on each interval being well formed, with normalizer \(Q_X=\PhiCDF\big((\mu_{e_X}-\mu_{s_X})/\sqrt{\sigma_{s_X}^2+\sigma_{e_X}^2}\big)\). In the body's coordinates this is a midpoint--duration pair that is \emph{correlated}: \(t_X=\tfrac12(s_X+e_X)\) and \(D_X=e_X-s_X\) are jointly Gaussian with
\begin{equation}
    \sigma_{t_X}^2=\tfrac14\big(\sigma_{s_X}^2+\sigma_{e_X}^2\big),\qquad
    \sigma_{D_X}^2=\sigma_{s_X}^2+\sigma_{e_X}^2,\qquad
    \operatorname{Cov}(t_X,D_X)=\tfrac12\big(\sigma_{e_X}^2-\sigma_{s_X}^2\big),
    \label{eq:endpoint-moments}
\end{equation}
so a sharp start (\(\sigma_{s_X}=0\)) is the start form, a sharp end is the end form, and \emph{equally} uncertain boundaries give \(\operatorname{Cov}(t_X,D_X)=0\) with \(\sigma_{D_X}=2\sigma_{t_X}\): the independent midpoint--duration model of the body is exactly the symmetric-boundary case. Conversely, independent latent \((t,D)\) with arbitrary \(\sigma_t,\sigma_D\) are boundaries \(s=t-D/2\), \(e=t+D/2\) of equal variance \(\sigma_t^2+\sigma_D^2/4\) and correlation \((\sigma_t^2-\sigma_D^2/4)/(\sigma_t^2+\sigma_D^2/4)\). The independence assumed in Section~\ref{sec:representation} is therefore not a different model from a boundary one, but the choice of which pair is taken uncorrelated.

Each form is the same inequality system in different coordinates: the latent vector \((\,\cdot\,,D_X,D_Y)\) and the truncation \(D_X,D_Y\ge0\) are unchanged---the boundary form either in its own four coordinates or, through Equation~\eqref{eq:endpoint-moments}, in \((Z,D_X,D_Y)\) with a non-diagonal \(\Sigma\)---and only the linear map to \((A,B,G,H)\) differs, so the orthant machinery is identical. What the choice \emph{does} change is the modelled law---which two quantities are taken as independent and Gaussian---so one selects the pair best matched to the data (start times for events with a known onset, end times for deadlines, the midpoint for symmetric jitter, the two boundaries when each is timestamped on its own), while the derivation, closed forms, and partition carry over verbatim.

\subsection{General Linear Form}

The start, end, and midpoint forms are instances of one template, and nothing in the derivation, the closed forms, or the partition depends on the choice. Collect every latent Gaussian temporal quantity of the two objects into a vector \(\bm U\sim\N(\bm\mu,\Sigma)\)---in the midpoint form of Section~\ref{sec:interval-interval}, \(\bm U=(Z,D_X,D_Y)\transpose\); in the start and end forms, \((S,D_X,D_Y)\transpose\) and \((E,D_X,D_Y)\transpose\); in the boundary form, \((S_X,E_X,S_Y,E_Y)\transpose\). The truncation \(\bm D\ge\bm0\) of Section~\ref{sec:representation} is a set of half-spaces of \(\bm U\)---its duration coordinates where the form has them, the rows \(E_X-S_X\ge0\) where it does not---with normalizer \(Q=\prod_X\Prob(D_X\ge0)\). The form enters through one object only---a linear difference map---while the relations are read the same way for every form.

First, the boundaries are linear in \(\bm U\), so the four boundary differences of Table~\ref{tab:prob-allen-ineq} are too:
\begin{equation}
    \begin{pmatrix}A\\B\\G\\H\end{pmatrix}=M\,\bm U ,
    \label{eq:diff-map}
\end{equation}
with the difference map \(M\) read off the boundaries---the only object the parameterization changes. From the start and end differences above and the midpoint form of Section~\ref{sec:interval-interval},
\begin{equation}
    M_{\mathrm M}=\begin{pmatrix}1&\tfrac12&-\tfrac12\\ 1&-\tfrac12&\tfrac12\\ 1&-\tfrac12&-\tfrac12\\ -1&-\tfrac12&-\tfrac12\end{pmatrix},\quad
    M_{\mathrm S}=\begin{pmatrix}1&0&0\\ 1&-1&1\\ 1&-1&0\\ -1&0&-1\end{pmatrix},\quad
    M_{\mathrm E}=\begin{pmatrix}1&1&-1\\ 1&0&0\\ 1&0&-1\\ -1&-1&0\end{pmatrix},
    \label{eq:Mforms}
\end{equation}
each acting on its form's \(\bm U\) (columns ordered as that form's coordinates); \(M_{\mathrm M}\) is the matrix \(M\) of Appendix~\ref{app:factorization}. The boundary form has the simplest map of all, acting on \((S_X,E_X,S_Y,E_Y)\transpose\),
\begin{equation}
    M_{\mathrm{SE}}=\begin{pmatrix}-1&0&1&0\\ 0&-1&0&1\\ 0&-1&1&0\\ 1&0&0&-1\end{pmatrix},
    \label{eq:Mse}
\end{equation}
each row one of the four boundary differences written out. No constant term is needed in any form---the differences are purely linear in \(\bm U\).

Second, each relation \(R\) is a sign pattern on those four differences---its signature \(\sigma_R\in\{-1,0,+1\}^4\) of Table~\ref{tab:primitive-signs} (the \(+,0,-\) of that table), the same for every form. The signature \emph{is} the selector: a matrix \(S_R\) with entries in \(\{-1,0,+1\}\) acting on \((A,B,G,H)\transpose\), where a strict sign \(\sigma_v\in\{\pm1\}\) gives the single row \(-\sigma_v\,v\le-\tau\) and a coincidence \(\sigma_v=0\) the contact pair \(+v\le\tau,\ -v\le\tau\). For the six strict relations this is just \(S_R=-\operatorname{diag}(\sigma_R)\) with \(\bm b_R=-\tau\mathbf 1\); each coincidence of a contact relation replaces that diagonal entry by a \(\pm\) row pair. Explicitly, \before\ (\(\sigma=(+,+,+,-)\), strict) and \meets\ (\(\sigma=(+,+,0,-)\), with \(G\) in contact) give
\begin{equation}
    S_{\before}=\begin{pmatrix}-1&0&0&0\\ 0&-1&0&0\\ 0&0&-1&0\\ 0&0&0&1\end{pmatrix},\qquad
    S_{\meets}=\begin{pmatrix}-1&0&0&0\\ 0&-1&0&0\\ 0&0&1&0\\ 0&0&-1&0\\ 0&0&0&1\end{pmatrix},
    \label{eq:SR-examples}
\end{equation}
with \(\bm b_{\before}=-\tau\mathbf 1\) and \(\bm b_{\meets}=\tau(-1,-1,1,1,-1)\transpose\); the coincident \(G\) of \meets\ supplies the two band rows \(\pm G\le\tau\), and rows beyond the minimal defining set of Table~\ref{tab:prob-allen-ineq} are entailed and harmless. Substituting the difference map~\eqref{eq:diff-map} carries the constraints to \(\bm U\),
\begin{equation}
    S_R\,M\,\bm U\le\bm b_R ,
    \label{eq:LR-compose}
\end{equation}
which is the relation system \(L_R(\bm U)=\bm A_R\bm U-\bm b_R\le\bm0\) of Equation~\eqref{eq:LR-def} with \(\bm A_R=S_R M\); the truncation \(\bm D\ge\bm0\) enters separately, as the conditioning event below. The parameterization thus lives entirely in \(M\) and the relation entirely in its signature \(\sigma_R\) (Table~\ref{tab:primitive-signs})---a relation is a choice of which signs the four differences take, not an additive shift of them. Hence
\begin{equation}
    P(R)=\frac{\Prob\!\big(L_R(\bm U)\le\bm0,\ \bm D\ge\bm0\big)}{Q},
\end{equation}
the conditional Gaussian mass of a polytope, evaluated by standardizing its affine rows with \(\bm\mu\) and \(\Sigma\) as in Section~\ref{sec:interval-interval}. Between parameterizations only \((M,\bm\mu,\Sigma)\) change; the signatures of Table~\ref{tab:primitive-signs}, the closed forms, and the partition carry over verbatim.

\paragraph{Example: \before\ in two coordinate systems.} From Table~\ref{tab:prob-allen-ineq}, \before\ is the single row \(G>\tau\). In the midpoint form \(G=Z-\tfrac12(D_X+D_Y)\), so
\begin{equation}
    P(\before)=\frac{\Prob\!\big(Z-\tfrac12(D_X+D_Y)>\tau,\ D_X,D_Y\ge0\big)}{Q};
\end{equation}
in the start form \(G=S-D_X\), giving \(\Prob(S-D_X>\tau,\ D_X,D_Y\ge0)/Q\). These are one probability written in different latent coordinates, and both collapse, when the durations are sharp, to the one-dimensional tail \(\tfrac12\erfcop(\cdot)\) of Section~\ref{sec:point-point}.

\paragraph{Example: correlated quantities.} If the two events share an onset cue, their start times are correlated and \(\Sigma\) is no longer diagonal. Nothing else changes: \(L_R\) and the truncation are as above, and the off-diagonal entries of \(\Sigma\) merely propagate into the standardizing covariance of the orthant integral. The independence assumed in Section~\ref{sec:representation} is thus a convenience, not a requirement.

\section{Primitive Factorization of the Probabilistic Algebra}
\label{app:factorization}

This appendix derives Proposition~\ref{prop:factorization}: the probabilistic algebra of Section~\ref{sec:interval-interval} is exactly the joint law of the four temporal primitives, and its primitive marginals have one-dimensional Gaussian closed forms.

\paragraph{Latent model and the primitive map.}
Let \(\bm U=(Z,D_X,D_Y)\transpose\), with \(Z=t_Y-t_X\sim\N(\Delta,\sigma_Z^2)\), \(\Delta=\mu_Y-\mu_X\), \(\sigma_Z^2=\sigma_X^2+\sigma_Y^2\), independent of the latent durations \(D_X,D_Y\) conditioned on \(D_X,D_Y\ge0\). The four boundary differences are linear in \(\bm U\):
\begin{equation}
\begin{pmatrix}A\\B\\G\\H\end{pmatrix}=M\bm U,\qquad
M=\begin{pmatrix}1&\tfrac12&-\tfrac12\\[2pt] 1&-\tfrac12&\tfrac12\\[2pt] 1&-\tfrac12&-\tfrac12\\[2pt] -1&-\tfrac12&-\tfrac12\end{pmatrix}.
\end{equation}
The rows obey \(M_H=M_G-M_A-M_B\), so \(H=G-A-B\) and \(\operatorname{rank}M=3\); the triple \((A,B,G)\) is an invertible image of \(\bm U\). Write \(\Phi(\bm U)=(\operatorname{sgn}_\tau A,\operatorname{sgn}_\tau B,\operatorname{sgn}_\tau G,\operatorname{sgn}_\tau H)\) and let \(M_v\) denote the row of \(M\) for difference \(v\).

\paragraph{Step 1: each relation is a primitive fiber.}
Table~\ref{tab:prob-allen-ineq} defines each relation by sign constraints on \((A,B)\) and, on the two diagonal cells, on \(G\) or \(H\); the remaining signs are entailed on the feasible cone. For instance \(G>\tau\) means \(b_X<a_Y-\tau\); with \(a_X<b_X\) and \(a_Y<b_Y\) this forces \(a_X<b_X<a_Y<b_Y\), hence \(A>\tau\), \(B>\tau\), \(H<-\tau\), so \(\{G>\tau\}=\Phi^{-1}(+,+,+,-)=\{\before\}\). The same entailment closes every row, giving \(\{R\}=\Phi^{-1}(\sigma_R)\); the thirteen signatures \(\sigma_R\) of Table~\ref{tab:primitive-signs} are exactly the feasible values of \(\Phi\). This is the partition of Proposition~\ref{prop:partition} read in primitive coordinates.

\paragraph{Step 2: relation probability is a joint primitive integral.}
Because \(\{R\}=\bigcap_v\{\operatorname{sgn}_\tau v=\sigma_R(v)\}\),
\begin{equation}
    P(R)=\frac{1}{Q_XQ_Y}\int_{\mathbb R^3}\Big[\prod_{v}\mathbf 1\{\operatorname{sgn}_\tau v=\sigma_R(v)\}\Big]\,
    \mathbf 1\{D_X\ge0,\,D_Y\ge0\}\,\phi(\bm u;\bm\mu,\Sigma)\,d\bm u,
\end{equation}
with \(Q_X=\Prob(D_X\ge0)\) and \(Q_Y=\Prob(D_Y\ge0)\). Each factor is a slab or half-space in \(\bm U\): \(\{\operatorname{sgn}_\tau v=+\}=\{M_v\bm U>\tau\}\), \(\{\operatorname{sgn}_\tau v=-\}=\{M_v\bm U<-\tau\}\), and the contact band \(\{\operatorname{sgn}_\tau v=0\}=\{|M_v\bm U|\le\tau\}\). Hence \(\{R\}\) is a (possibly lower-dimensional) polytope and \(P(R)\) is the Gaussian measure of that polytope --- the conditional multivariate-Gaussian orthant probability of Section~\ref{sec:interval-interval}. Each contact band expands by inclusion--exclusion,
\begin{equation}
    \mathbf 1\{|M_v\bm U|\le\tau\}=\mathbf 1\{M_v\bm U\le\tau\}-\mathbf 1\{M_v\bm U<-\tau\},
\end{equation}
so \(P(R)\) is a signed sum of pure orthant probabilities; the number of bands equals the contact count \(c(R)\) of Proposition~\ref{prop:contact}.

\paragraph{Step 3: primitive marginals in closed form.}
Marginalizing the joint over three coordinates gives the coarse predicate \(P(\operatorname{sgn}_\tau v=s)=\sum_{R:\,\sigma_R(v)=s}P(R)\). Each is one-dimensional: conditioning on \((d_X,d_Y)\) leaves \(Z\sim\N(\Delta,\sigma_Z^2)\) as the only free Gaussian, and each difference is \(\pm Z\) plus a duration shift. With the expectation \(\E_d\) taken over the truncated durations, the gap \(G=Z-\tfrac12(d_X+d_Y)\) gives
\begin{align}
    P(\operatorname{sgn}_\tau G={+})&=\E_{d}\!\left[\PhiCDF\!\left(\frac{\Delta-\tau-\tfrac12(d_X+d_Y)}{\sigma_Z}\right)\right]=P(\before),\\
    P(\operatorname{sgn}_\tau G={0})&=\E_{d}\!\left[\PhiCDF\!\left(\frac{\Delta+\tau-\tfrac12(d_X+d_Y)}{\sigma_Z}\right)-\PhiCDF\!\left(\frac{\Delta-\tau-\tfrac12(d_X+d_Y)}{\sigma_Z}\right)\right]\notag\\
    &=P(\meets),
\end{align}
and the start/start difference \(A=Z+\tfrac12(d_X-d_Y)\) yields shared start,
\begin{equation}
\begin{split}
    P(\operatorname{sgn}_\tau A={0})&=\E_{d}\!\left[\PhiCDF\!\left(\frac{\tau+\tfrac12(d_X-d_Y)-\Delta}{\sigma_Z}\right)-\PhiCDF\!\left(\frac{-\tau+\tfrac12(d_X-d_Y)-\Delta}{\sigma_Z}\right)\right]\\
    &=P(\starts)+P(\equals)+P(\startedby).
\end{split}
\end{equation}
These one-dimensional forms agree with the full multivariate engine to Monte-Carlo precision.

\paragraph{Step 4: the joint does not factor.}
Since \(\operatorname{rank}M=3<4\), the primitives obey the deterministic identity \(H=G-A-B\), and \(\Sigma\) couples them; thus \(P(R)\ne\prod_v P(\operatorname{sgn}_\tau v=\sigma_R(v))\) except in degenerate limits. Two regimes make the gap explicit. \emph{Entailment:} \(G>\tau\Rightarrow A,B>\tau\), so \(P(\before)=P(\operatorname{sgn}_\tau G={+})\), well below the product of the three marginals. \emph{Conflict:} at a shared midpoint (\(\Delta=0\), small \(\sigma_Z\)) one has \(A\approx-B\), so \(\{A>\tau\}\) and \(\{B>\tau\}\) are nearly mutually exclusive and their joint falls far below the product. The covariance \(\Sigma\) is precisely the dependence a logical primitive ontology such as CIDOC~CRM omits and that the probabilistic algebra restores.

\section{Software Interface and Implementation Notes}
\label{app:software}

The algebra ships as an open, MIT-licensed, tested Python package (import name \texttt{paa}, distribution \texttt{probabilistic-\allowbreak allen-\allowbreak algebra}). This appendix documents the user-facing interface---the functions you need to call to obtain the probabilities derived in the body---and the implementation choices that bear on the paper's claims, in particular cost and numerical accuracy. Every quantity below is the analytic object defined earlier, not a Monte-Carlo estimate; sampling is used only for validation (Section~\ref{sec:partition}).

\paragraph{Availability and contents.}
The package is at \url{https://github.com/HRI-EU/probabilistic-allen-algebra} and installs with \texttt{pip install probabilistic-allen-algebra} (Python~\(\ge3.11\), depending only on \texttt{numpy} and \texttt{scipy}). It is analytic and self-contained: it needs no data, no network, and no API access, and reproduces every number and figure in this paper from source. Six modules realize the constructions of the preceding sections---\texttt{intervals} (the uncertain objects of Section~\ref{sec:representation}), \texttt{relations} (the thirteen orthant probabilities of Section~\ref{sec:interval-interval}), \texttt{taxonomy} (the partition tree and coarse predicates of Section~\ref{sec:taxonomy}), \texttt{decode} (the best-relation selectors of Section~\ref{sec:best-relation}), \texttt{primitives} (the boundary-difference factorization of Appendix~\ref{app:factorization}), and \texttt{plotting} (the figure generators)---and a command-line interface (\texttt{paa prob\,|\,coarse\,|\,decode\,|\,\ldots}, equivalently \texttt{python -m paa}) exposes the same queries from the shell. A \texttt{pytest} suite enforces the partition, the Monte-Carlo agreement, the limiting reductions, and the invariances (Section~\ref{sec:validation}), so every reported result is reproducible from source. The optional \texttt{viz} extra adds \texttt{matplotlib} for the figure generators; nothing else is required.

\paragraph{Choosing the right call.} Three decisions route a problem to its formula and function (Figure~\ref{fig:decision}, Table~\ref{tab:decision}): the object types, the quantities known for each interval, and the desired output.

\begin{figure}[t]
\centering
\begin{tikzpicture}[
  node distance=4mm,
  stage/.style={draw, rounded corners, fill=black!4, align=left,
                text width=0.92\linewidth, inner sep=5pt, font=\small}]
  \node[stage] (s1) {\textbf{1.\ What are the two objects \(X,Y\)?}\quad both points \(\rightarrow\) point--point (3 relations, \(\mathrm{erf}/\mathrm{erfc}\)); one point, one interval \(\rightarrow\) point--interval (5 relations, \(\Phi_2\)); two intervals \(\rightarrow\) interval--interval (13 relations, \(\Phi_3/\Phi_4\)).};
  \node[stage, below=of s1] (s2) {\textbf{2.\ For each interval, which quantities are known?}\quad position \(+\) duration \(\rightarrow\) \texttt{IntervalGaussian(mu\_t, sigma\_t, mu\_d, sigma\_d)}; start \(+\) duration \(\rightarrow\) \texttt{from\_start(\ldots)}; end \(+\) duration \(\rightarrow\) \texttt{from\_end(\ldots)}; a point \(\rightarrow\) \texttt{point(mu\_t, sigma\_t)}.};
  \node[stage, below=of s2] (s3) {\textbf{3.\ Choose the tolerance \(\tau\); then the query.}\quad all thirteen probabilities \(\rightarrow\) \texttt{relation\_probabilities(e, r, tau)}; a coarse family \(\rightarrow\) \texttt{coarse\_predicates}\,/\,\texttt{refine}; a single best label \(\rightarrow\) \texttt{hierarchical\_decode}\,/\,\texttt{map\_relation}.};
  \draw[->] (s1) -- (s2);
  \draw[->] (s2) -- (s3);
\end{tikzpicture}
\caption{Using the formulas and software in three decisions. The object types fix the closed-form family and how many relations are non-trivial (Sections~\ref{sec:point-point}--\ref{sec:interval-interval}); the known quantities fix each interval's constructor (Section~\ref{sec:parameterization}); the desired output fixes the call. The first decision is conceptual---points are degenerate intervals, so \texttt{relation\_probabilities} serves all three cases through one call---so the chart maps the formulas rather than forcing a branch in code.}
\Description{A three-step decision diagram: choose the object types (point or interval), choose which quantities are known, and choose the tolerance; each step names the closed form and the package function to call.}
\label{fig:decision}
\end{figure}

\begin{table}[t]
\centering\small
\begin{tabular}{lccl}
\toprule
objects \((X,Y)\) & relations & closed form & constructor(s) \\
\midrule
point, point       & 3  & \(\mathrm{erf}/\mathrm{erfc}\) & \texttt{point} \\
point, interval    & 5  & bivariate \(\Phi_2\)          & \texttt{point} \(+\) interval ctor \\
interval, interval & 13 & \(\Phi_3/\Phi_4\) orthant     & interval ctor \\
\bottomrule
\end{tabular}
\caption{Object types fix the closed-form family and the number of non-trivial relations; Step~2 of Figure~\ref{fig:decision} fixes the constructor for each interval.}
\label{tab:decision}
\end{table}

\paragraph{Constructing uncertain objects.}
An uncertain interval is created as \texttt{IntervalGaussian(\allowbreak mu\_t, sigma\_t, mu\_d, sigma\_d)}, a Gaussian midpoint \(t\sim\N(\mu_t,\sigma_t^2)\) and a latent duration \(D\sim\N(\mu_d,\sigma_d^2)\) truncated to \(D\ge0\) (Section~\ref{sec:representation}). A time point is the helper \texttt{point(mu\_t, sigma\_t=0.0)}, the degenerate interval with zero duration. Deterministic limits are admissible: \(\sigma_t=0\) fixes the midpoint and \(\sigma_d=0\) fixes the duration.

\paragraph{Computing relation probabilities.}
The single entry point is \texttt{relation\_probabilities(e, r, tau=0.0)}, which returns the thirteen probabilities for the event \texttt{e} against the reference \texttt{r} as a dictionary keyed by relation name. The object-oriented form \texttt{ProbabilisticAllenRelations(e, r)} exposes one method per relation (e.g.\ \texttt{.before(tau)}) and \texttt{.all\_relations(tau)}; the tolerance \(\tau\) is the single partition parameter of Section~\ref{sec:partition}. Listing~\ref{lst:quickstart} runs the full flow end to end.

\lstset{language=Python, basicstyle=\footnotesize\ttfamily, keywordstyle=\color{blue!55!black},
  commentstyle=\color{gray!75!black}, stringstyle=\color{green!40!black},
  showstringspaces=false, columns=fullflexible, frame=single, framesep=4pt,
  xleftmargin=4pt, breaklines=true}
\begin{lstlisting}[caption={Minimal end-to-end example: construct two uncertain intervals, compute the thirteen relation probabilities (a partition summing to one), and traverse the taxonomy of Section~\ref{sec:taxonomy}---coarse-predicate node masses, a within-parent refinement, and the hierarchical decode. Each \texttt{print} displays the value shown in the trailing comment.}, label={lst:quickstart}]
from paa import (IntervalGaussian, relation_probabilities,
                 coarse_predicates, refine, hierarchical_decode)

e = IntervalGaussian(mu_t=0.0, sigma_t=1.0, mu_d=2.0, sigma_d=0.3)
r = IntervalGaussian(mu_t=3.0, sigma_t=1.0, mu_d=1.5, sigma_d=0.4)

p = relation_probabilities(e, r, tau=0.25)      # dict: 13 relations -> probability
print(sum(p.values()))                          # ~ 1.0   (a true partition)

c = coarse_predicates(p)                         # masses of every taxonomy node
print(c["separated"], c["non_separated"])       # parent masses, sum to 1
print(refine(p, "precede"))                      # {'before':.., 'meets':..}, sums to 1
print(hierarchical_decode(p))                    # [('separated',..),('precede',..),('before',..)]
\end{lstlisting}

\paragraph{Coarse predicates, refinement, and selection.}
Given the leaf dictionary \texttt{p}, the taxonomy calculus of Section~\ref{sec:taxonomy} is realized by \texttt{coarse\_predicates(p)} (the probability of every node, each the sum of its leaves) and \texttt{refine(p, parent)} (the conditional distribution \(P(\text{leaf}\mid\text{parent})\)). The partition tree itself is the data structure \texttt{TREE}, with \texttt{LEAVES}, \texttt{leaves\_of}, and the overlapping non-strict containment unions \texttt{VIEWS}. Three best-fitting-relation selectors implement the discussion of Section~\ref{sec:best-relation}: \texttt{most\_probable\_relation} (flat \(\argmax\), measure-biased), \texttt{hierarchical\_decode} (top-down maximum-a-posteriori over the taxonomy), and \texttt{map\_relation} (the relation of the modal arrangement). The primitive decomposition of Appendix~\ref{app:factorization} is exposed by \texttt{primitive\_probabilities} (the four three-way marginals), \texttt{decompose}, \texttt{contacts}, and the signatures \texttt{CANONICAL\_SIGNS}.

\paragraph{Implementation and cost.}
Inference is analytic and sampling-free. Each relation is the Gaussian measure of a polytope in the latent \(\bm U=(Z,D_X,D_Y)\transpose\) with \(Z=t_Y-t_X\) (Section~\ref{sec:interval-interval}), evaluated as a multivariate-normal CDF in \emph{at most three} dimensions; \before\ and \after\ collapse to a single half-space (a one-dimensional \(\PhiCDF\) over the gap), and the point--point and point--interval cases reduce to the error function and a bivariate \(\Phi_2\) (Appendix~\ref{app:limits}). A call to \texttt{relation\_probabilities} therefore returns all thirteen from a handful of low-dimensional CDF evaluations in time independent of the temporal scale and of any sample count. Degenerate inputs are handled exactly rather than as a numerical limit: any latent coordinate with standard deviation at or below \texttt{eps} (default \(10^{-12}\)) is removed from the Gaussian system and folded into a deterministic constant, so points and fixed durations lower the CDF dimension and realize the collapse limits of Appendix~\ref{app:limits} without singular covariances; the lower-dimensional contact bands are evaluated with a singular-aware CDF. Every probability is divided by the truncation normalizer \(Q=\Prob(D_X\ge0)\,\Prob(D_Y\ge0)\) that conditions on non-negative durations. Because no random number generator is involved at inference, results are deterministic and reproducible.

\paragraph{Numerical validation.}
The analytic probabilities agree with an independent Monte-Carlo classifier to a maximum absolute error of \(\approx1\text{--}2\times10^{-4}\) at \(4\times10^6\) samples, and the thirteen sum to one within that noise for every tolerance tested, confirming Proposition~\ref{prop:partition} (Section~\ref{sec:validation}); scale invariance and converse symmetry hold to the same tolerance. These checks, together with the limiting reductions and the figure generators, are shipped as the package test suite so that all reported results are reproducible from source.

\bibliographystyle{plain}
\bibliography{references}

\end{document}